\newtheorem{definition}{Definition}[section]
\newtheorem{theorem}[definition]{Theorem}
\newtheorem{lemma}[definition]{Lemma}
\newtheorem{corollary}[definition]{Corollary}
\newtheorem{proposition}[definition]{Proposition}
\newenvironment{proof}{\noindent {\bf Proof:}}{\qed}
\newtheorem{claim}{Claim}
\newenvironment{claimproof}[1]{\noindent {\bf Proof of Claim \ref{#1}:}}{\qed}
\def\squarebox#1{\hbox to #1{\hfill\vbox to #1{\vfill}}}
\newcommand{\nop}[1]{}
\newcommand{\newc}{\newcommand}
\newc{\comment}[1]{}
\newc{\ael}[1]{\mbox{\mbox{\bf{L}}}#1}
\newc{\card}[1]{\mbox{$|\mbox{#1}|$}}
\newc{\dl}{\mbox{$(D,W)$ \vspace{0.5em}}}
\newc{\dr}{\mbox{$\alpha : \beta / \gamma$}}
\newc{\edge}{\mbox{$\longrightarrow$}}
\newc{\eg}{\mbox{e.g., }}
\newc{\entails}{\mbox{$\models$}}
\newc{\es}{\mbox{$E^*$}}
\newc{\ext}{\mbox{$\th{E}$}}
\newc{\false}{\mbox{\bf false}}
\newc{\ie}{\mbox{i. e.}}
\newc{\impl}{\mbox{$ \hspace{0.1em} \Rightarrow \hspace{0.1em}$}}
\newc{\inter}{\mbox{$\bigcap$ }}
\newc{\lang}{\mbox{$\cal L$}}
\newc{\lc}[1]{\mbox{$#1^*$}}
\newc{\lett}[1]{\mbox{$letter(#1)$}}
\newc{\limp}{\mbox{$\longrightarrow$}}
\newc{\limm}{\mbox{$\longleftrightarrow$}}
\newc{\lpimp}{\mbox{$\vspace{0.5em}\longleftarrow\vspace{0.5em}$}}
\newc{\lpneg}{\mbox{{\it not} \vspace{0.5em}}}
\newc{\lpor}{\mbox{\vspace{0.5em} $\mid$ \vspace{0.5em}}}
\newc{\lit}{\mbox{$\lang$}}
\newc{\liter}[1]{\mbox{$liter(#1)$}}
\newc{\nat}{\mbox{$\cal N$}}
\newc{\negg}{\mbox{$\sim$}}
\newc{\notsubseteq}{\mbox{$\subseteq \hspace{-0.8em} / \hspace{0.15em} $}}
\newc{\ncomp}{\mbox{$\comp \hspace{-0.5em} \setminus$}}
\newc{\nmodels}{\mbox{$\models \hspace{-0.8em} \setminus$}}
\newc{\npr}{\mbox{$\prov \hspace{-0.5em} / \hspace{0.25em} $}}
\newc{\parl}{\mbox{$\parallel$}}
\newc{\pnp}{\mbox{$\mbox{$P^{NP[O(\log n)]}$}$}}
\newc{\pptc}{\mbox{$\Pi^p_2$-complete}}
\newc{\pr}{\mbox{{\em Proof: }}}
\newc{\prov}{\mbox{$\vdash$}}
\newc{\pvar}{\mbox{\bf \cal V}}
\newc{\rel}{\mbox{$\cal R$}}
\newc{\res}[2]{\mbox{$res(#1,#2)$}}
\newc{\refi}[1]{\mbox{(\ref{#1})}}
\newc{\st}{\mbox{$\mid$}}
\newc{\tag}[1]{\mbox{$#1^\prime$}}
\newc{\tagg}[1]{\mbox{$#1^{\prime \prime}$}}
\newc{\true}{\mbox{\bf true}}
\newc{\union}{\mbox{$\bigcup$}}
\newc{\wlg}{\mbox{w.\ l.\ g.\ }}
\newc{\wrt}{\mbox{w.\ r.\ t.\ }}
\newc{\ws}{\mbox{$W \union S$}}
\newcommand\nbd{not~}
\newcommand\Lit{{\rm Lit}}
\newcommand\pre{{\it pre}}
\newcommand\concl{{\it concl}}
\newcommand\just{{\it just}}
\newcommand\PP{\textit{P}}
\newcommand\NL{\textit{NL}}
\newcommand\NP{\textit{NP}}
\newcommand\CONP{\textit{co-NP}}
\newcommand\SigmaP[1]{\Sigma^{P}_{#1}}
\newcommand\PiP[1]{\Pi^{P}_{#1}}
\newcommand\DeltaP[1]{\Delta^P_{#1}}
\newcommand\DP{{\rm D^P}}
\journal{arXiv.org}
\begin{document}

\begin{frontmatter}

\title{Outlier detection in default logics: the tractability/intractability frontier}


\author[dimes]{Fabrizio Angiulli}
\author[rachel]{Rachel Ben-Eliyahu--Zohary}
\author[dimes]{Luigi Palopoli}

\address[dimes]{DIMES, University of Calabria, Italy}
\address[rachel]{Software Engineering Dept., Jerusalem College of Engineering}

\begin{abstract}
In default theories, outliers denote sets of literals featuring unexpected
properties. In previous papers, we have defined outliers in default logics and
investigated their formal properties. Specifically, we have looked into the
computational complexity of outlier detection problems and proved that while
they are generally intractable, interesting tractable cases can be singled out.
Following those results,  we study here the tractability frontier in outlier
detection problems, by analyzing it with respect to $(i)$ the considered outlier
detection problem, $(ii)$ the reference default logic fragment, and $(iii)$ the
adopted notion of outlier. As for point $(i)$, we shall consider three problems
of increasing complexity, called {\em Outlier-Witness Recognition}, {\em Outlier
Recognition} and {\em Outlier Existence}, respectively. As for point $(ii)$, as
we look for conditions under which outlier detection can be done efficiently,
attention will be limited to subsets of {\em Disjunction-free propositional
default theories}. As for point $(iii)$, we shall refer to both  the notion of
outlier of \citep{ABP08} and a new and more restrictive one, called
\textit{strong} outlier.  After complexity results, we present a polynomial time
algorithm for enumerating all strong outliers of bounded size in an quasi-acyclic
normal unary default theory. Some of our tractability results rely on the {\em
Incremental Lemma} that provides conditions for a deafult logic fragment to have
a monotonic behavior. Finally, in order to show that the simple fragments of DL we deal with are still rich enough to solve interesting problems and, therefore, the tractability results that we prove are interesting not only on the mere theoretical side, insights into the expressive capabilities of these
fragments are provided, by showing that normal unary
theories express all NL queries, hereby indirectly answering a question raised
by Kautz and Selman.
\end{abstract}

\begin{keyword}
Default Logic \sep Outlier detection \sep Computational complexity \sep Tractable algorithms
\end{keyword}

\end{frontmatter}

\section{Introduction}
Consider a rational agent acquiring information about the world, where such
information is stated in the form of a sets of facts. In this setting, it is
certainly relevant
to recognize if some of these facts are  \textit{outliers}, that is, facts that
disagree with her own expectations of the
behavior of the world.
The normal behavior of the world could be encoded somehow
using one of the several non-monotonic languages
defined and studied in the Artificial Intelligence literature \citep{Win84,RussellN03,NeaJ12}. Among
those, Reiter's default logic \citep{Rei80}, has unquestionably gained a much
prominent role \citep{PooMG98,Min00}.

In the paper \citep{ABP08}, we have formally defined the notion
of outlier in the context of knowledge bases  expressed  in Reiter's default
logic
and studied some of the associated computational problems.
Outliers can be intuitively described as sets of anomalous observations in that
they feature some properties contrasting with those that can be
logically ``justified'' according to the given knowledge base.
Along with outliers, their \textit{witnesses}
are to be singled out.
Witnesses are sets of
observations that encode  the reason why the observations marked as outliers are
identified as such.

To illustrate, consider a scenario
where a credit card number is used several
times during a specific day to pay for services provided through the Internet.
This sounds normal so far, but add to that the fact that
the payment is done through different IPs, each of which is located
in a different country. It might be the case that the credit
card owner is
traveling on this particular day, but if the different countries
from which the credit card is used are located in different continents,
we might get really suspicious about who has put his hands on these credit card numbers.
In our terminology, we say that the fact that the credit card number
is used in different continents during the same day makes this credit card an outlier.

As shown in \citep{ABP08}, outlier detection problems are generally computationally
quite hard, their associated complexities ranging from $\rm D^P$-complete to $\rm D^P_3$-complete, depending
on the specific form of problem one deals with. For this reason, in
\citep{ABP10} we have looked into
significant fragments of general Reiter's default logics where
lower complexities are involved. In particular, while the outlier detection problems   remains NP-hard several language fragments, a tractable case was actually singled out. The case is
the Outlier-Witness recognition problem (the problem of telling if, for the theory at hand, a given pair of sets of literals indeed forms an outlier-witness pair),
that was proved to be solvable in polynomial time over normal unary default theories.

This initial tractability result prompted us to embark on a more systematic study of the tractability frontier associated with
outlier detection problems. 
To illustrate, in the papers \citep{ABP08,ABP10}
three basic outlier detection problems of increasing complexity
have been defined, that are:
\begin{itemize}
\item
\textit{Outlier-Witness Recognition},
\item
\textit{Outlier Recognition}, and
\item
\textit{Outlier Existence}.
\end{itemize}
Clear enough, for any of the above problems, the computational complexity underlying outlier detection depends
on the exact default language used to encode the background
knowledge base $\Delta$ and on the specific notion of outlier one intends to
deal with.
In particular, we know from previous results ) that the most general of the
problems above, namely, \textit{Outlier Existence}, and the simpler
\textit{Outlier recognition} problem,
are NP-hard even on normal unary theories.
On the other hand, on such form of theories, the problem \textit{Outlier-Witness
Recognition} is solvable in polynomial time\citep{ABP08,ABP10}.

In this paper, we continue along this line of research and try to depict the
contour of the tractability region for outlier detection problems. To this end, we shall
again consider well-known simple fragments of default logic, such as
normal unary theories, and the more
general mixed normal unary theories, and we shall
introduce a new natural notion of outlier, which we call  \textit{strong outlier}. This notion restricts  outliers to be only those featuring a tight  relationship of the outlier set with its witness.
Along the way, we shall also prove some side-, yet interesting, results
regarding the expressiveness  of the language fragments mentioned above,
and regarding some monotonicity property featured by mixed normal unary theories.

The paper is organized as follows. The next section recalls
preliminary notions about computational complexity, Reiter's
propositional default logic and its language
fragments that we will be deal with, the notion of
outliers as defined
in \citep{ABP08} and, finally, the outlier detection problems we will focus on.
Section \ref{sect:lang_properties} provides some results about the
default logic fragments referred to above. We will  first discuss the
expressive capabilities featured (via entailment) by the simple fragments of
default logic to which we have to resort in order to chart the tractability
frontier of outlier detection problems. These results show that, although very simple, these fragments are indeed rich enough to solve interesting problems. Second, we will present the Incremental Lemma,
which provides an interesting and useful monotonicity characterization of mixed normal unary
theories theories. Third, we will  prove a technical lemma (Lemma \ref{lemma:intract})
which shows how the evaluation of the truth value of a CNF formula under a
specific assignment can be accomplished
in a way that relates it to the definition of outliers.
The next four
sections are devoted to drawing  the tractability frontier associated with the outlier
detection problems
\textit{Outlier-Witness Recognition}, \textit{Outlier Recognition} and
\textit{Outlier Existence}, and
to presenting the tractable \textit{Outlier Enumeration} algorithm.
We conclude with a detailed discussion
on the transition between tractability and intractability in outlier detection using default logic in Section \ref{sect:discussion} and, in Section \ref{sect:concl}, we draw some final remarks.

\section{Preliminaries}
\label{sect:prelim}

\subsection{Complexity theory}

In this section we recall some basic definitions and results of complexity theory.
The reader is referred to
\citep{Johnson91} for additional information.

\emph{Decision} problems are maps from strings (encoding the input instance over
a fixed alphabet, e.g., the binary alphabet $\{0,1\}$) to the
set $\{ ``yes" , ``no" \}$. For a given input $x$, its size is denoted by
$||x||$.
The class $\PP$ is the set of decision problems that can be solved by a
deterministic Turing machine in time polynomial
in the input size.
machines using a work-space of logarithmic size is denoted by $\rm L$.
Throughout the paper, we shall often refer to computations carried out by
\emph{non-deterministic} Turing machines. We recall that these are Turing
machines that, at some points of the computation, may not have one single next
action to perform, but a \emph{choice} between several possible
next actions. A non-deterministic Turing machine answers a decision problem if,
on any input $x$, there is at least one sequence of choices
leading to halt in an accepting state if $x$ is a ``yes'' instance (such a
sequence is called accepting computation path); and, if $x$ is a ``no''
instance, all
possible sequences of choices lead to a rejecting state.
The class of decision problems that can be solved by non-deterministic Turing
machines using a work-space of logarithmic size is denoted by $\NL$.
The class of decision problems that can be solved by non-deterministic Turing
machines in polynomial time is denoted by $\NP$.
Problems in $\NP$ enjoy the property that any ``yes'' instance $x$ has a
\emph{certificate} for it being a ``yes'' instance, which has
polynomial length and which can be checked in polynomial time (in the size
$||x||$).
The class of problems whose complementary problems are in $\NP$ is denoted by
$\CONP$.

Recall that an \textit{oracle} is a subroutine which is supposed to require
constant computational resources to terminate.
The classes $\SigmaP{k}$, $\PiP{k}$, and $\DeltaP{k}$, forming the
\emph{polynomial hierarchy}, are defined as follows: $\SigmaP{0} = \PiP{0} =
\PP$ and for all $k\ge 1$, $\SigmaP{k}=\NP^{\Sigma^P_{k-1}}$,
$\DeltaP{k}=\PP^{\Sigma^P_{k-1}}$, and $\PiP{k}=\textit{co-}\SigmaP{k}$ where
$\textit{co-}\SigmaP{k}$ denotes the class of problems whose complementary problem
is solvable in $\SigmaP{k}$. Here, $\SigmaP{k}$ (resp.,
$\DeltaP{k}$) models computability by a non-deterministic (resp., deterministic)
polynomial-time Turing machine that may use an oracle in
$\SigmaP{k-1}$. Note that $\SigmaP{1}$ coincides with $\NP$, and that $\PiP{1}$
coincides with $\CONP$.

Next, the class $\DP^k$ is the class of problems that can be defined as a
conjunction of two problems, one from $\SigmaP{k}$ and one from $\PiP{k}$.
Thus, $\DP^k$ is a superset of both $\SigmaP{k}$ and one from $\PiP{k}$ (and, in
particular, $\DP$ is a superset of both $\NP$ and $\CONP$).

We conclude by recalling the notion of reducibility among decision problems.
A decision problem $A_1$ is \emph{polynomially reducible} to a decision problem
$A_2$, denoted by $A_1 \leq_p A_2$, if there is a
polynomial-time computable function $h$ (called reduction) such that, for every
$x$, $h(x)$ is defined and $x$ is a ``yes'' instance of $A_1$
if and only if $h(x)$ is a ``yes'' instance of $A_2$. A decision problem $A$ is
\emph{hard} for a class $\mathcal{C}$ of the polynomial
hierarchy (at any level $k\geq 1$, i.e., beyond $\PP$) if every problem in
$\mathcal{C}$ is polynomially reducible to $A$; if $A$ is hard for
$\mathcal{C}$ and belongs to $\mathcal{C}$, then $A$ is said to be
\emph{complete} for~$\mathcal{C}$. Thus, problems that are complete for
$\mathcal{C}$ are the most difficult problems in $\mathcal{C}$. In particular,
they cannot belong to some lower class in the hierarchy unless
some collapse occurs.


\subsection{Reiter's Default Logic}

Default logic was introduced by Reiter \citep{Rei80} and we next recall
basic facts about its propositional fragment. For $T$, a propositional theory, and $S$, a set of propositional formulae, $T^*$
denotes the logical closure of $T$, and $\neg S$ the set $\{\neg s \st s \in S\}$\footnote{As usual, for any letter $a$, we assume $\neg \neg a = a$.}. A set of literals $L$ is {\em inconsistent} if $\neg
{\ell} \in L$ for some literal ${\ell}\in L$. Given a literal $\ell$, $\lett{\ell}$ denotes the letter in the literal $\ell$.
Given a set of literals $L$, $\lett{L}$ denotes the set
$\{A \mid A=\lett{\ell} \mbox{ for some } \ell \in L\}$.
\subsubsection{Syntax}
A {\em propositional default theory} $\Delta$ is a pair $(D,W)$ where $W$ is a set of propositional formulae and $D$ is a set of
default rules. We assume that both sets $D$ and $W$ are finite.
A {\em default rule} $\delta$ is
\begin{equation}\label{eq:defaultrule}
\frac{\alpha : \beta_1,\ldots,\beta_m}{\gamma}
\end{equation}
where $\alpha$ (called \emph{prerequisite}), $\beta_i$, $1\le i\le
m$ (called \emph{justifications}) and $\gamma$ (called
\emph{consequent}) are propositional formulae. For $\delta$ a
default rule, $\pre(\delta)$, $\just(\delta)$, and $\concl(\delta)$
denote the prerequisite, justification, and consequent of $\delta$,
respectively. Analogously, given a set of default rules, $D =
\{\delta_1,\ldots,\delta_n\}$, $\pre(D)$, $\just(D)$, and
$\concl(D)$ denote, respectively, the sets $\{ \pre(\delta_1)$,
$\ldots$, $\pre(\delta_n) \}$, $(\just(\delta_1) \cup \ldots \cup
\just(\delta_n))$, and $\{ \concl(\delta_1), \ldots,
\concl(\delta_n) \}$. The prerequisite may be missing, whereas the
justification and the consequent are required (an empty
justification denotes the presence of the identically true literal
{\bf true} specified therein).
The informal meaning of a default rule $\delta$ is as follows: If $\pre(\delta)$ is known to hold and if it is consistent to assume
$\just(\delta)$, then infer $\concl(\delta)$.

Next, we introduce some well-known subsets of propositional
default theories relevant to our purposes
(see \citep{Rei80,KaSe91}).

{\bf Normal theories}. If the conclusion of a default rule is
identical
to the justification the rule is called {\em normal}.
A default theory containing only
normal default rules is called
{\em normal}.

{\bf Disjunction-free theories}. A default theory
$\Delta=(D,W)$ is {\it disjunction free} (DF for short)
\citep{KaSe91}, if $W$ is a set of literals, and, for each $\delta$
in $D$, $\pre(\delta)$, $\just(\delta)$, and $\concl(\delta)$ are
conjunctions of literals.

{\bf Normal mixed unary theories}. A DF default theory is {\em
normal mixed unary} (NMU for short) if its set of defaults contains
only rules of the form $\frac{\alpha:\beta}{\beta}$, where $\alpha$
is either empty or a single literal and $\beta$ is a single literal.

{\bf Normal and dual normal unary theories}. An NMU default
theory is {\em normal unary} (NU for short) if the prerequisite of
each default is either empty or positive. An NMU default theory is
{\em dual normal} (DNU for short) unary if the prerequisite of each
default is either empty or negative.

Next, the concept of quasi-acyclic theory is introduced. We begin by introducing the notions of atomic dependency graph and that of
tightness of a NMU default theory.
\begin{definition}[Atomic Dependency Graph]\rm
Let $\Delta = (D,W)$ be a NMU default theory. The {\em atomic dependency graph} $(V,E)$ of $\Delta$ is a directed graph such
that
\begin{itemize}
\item[--]
$V = \{ l \mid l \mbox{~is~a~letter~occurring~in~} \Delta \}$, and
\item[--]
$E = \{ (x,y) \mid $ letters $x$ and $y$ occur respectively in
the prerequisite and the consequent of a default in $D \}$.
\end{itemize}
\end{definition}
\begin{definition}[A set influences a literal]
Let $\Delta = (D,W)$ be an NMU default theory. We say that a set of literals $S$ \emph{influences} a literal $l$ in $\Delta$ if for some $t \in S$
there is a path from $\lett{t}$ to $ \lett{l}$ in the atomic dependency graph of $\Delta$.
\end{definition}
\begin{definition}[Tightness of an NMU theory]\rm
The {\em tightness} $c$ of an NMU default theory is the size $c$ (in terms of number of atoms) of
the largest strongly connected component (SCC) of its atomic dependency graph.
\end{definition}
Intuitively, a quasi-acyclic NMU default theory is a theory whose tightness is upper-bouded by a fixed constant.
\begin{definition}[quasi-acyclic NMU theory]\rm
Given a fixed positive integer $c$, a NMU default theory is said to be
($c$-)\textit{quasi-acyclic}, if its tightness is not greater than $c$.
\end{definition}
Figure \ref{fig:classes_map} highlights the set-subset relationships holding between
the above defined fragments of default logic.
\begin{figure}
\begin{center}
\includegraphics[width=250px,height=150px]{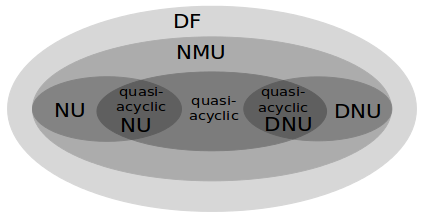}
\end{center}
\caption{A map of the investigated default theory fragments}
\label{fig:classes_map}
\end{figure}
\subsubsection{Semantics}
The formal semantics of a default theory $\Delta$ is defined in terms of {\em extensions}. A set ${\cal E}$ is an extension for a theory $\Delta = (D,W)$
if it satisfies the following set of equations:
\begin{itemize}
\item
$E_0=W$,
\item
for $i\ge 0$, $E_{i+1} = E_i^\ast \cup \left\{ \gamma \mid
\frac{\alpha:\beta_1,\ldots,\beta_m}{\gamma} \in D, \alpha \in E_i,
\neg\beta_1\not\in{\cal E}, \ldots, \neg\beta_m\not\in{\cal E}
\right\}$,
\item
$\displaystyle {\cal E} = \bigcup_{i=0}^\infty E_i$.
\end{itemize}
Given a default $\delta$ and an extension $\cal E$, we say that $\delta$ is applicable in $\cal E$ if $\pre(\delta) \in \cal E$
and $(\not \exists c \in \just(\delta ))(\neg c \in \cal E)$.

It is well known that an extension ${\cal E}$ of a finite propositional default theory
$\Delta=(D,W)$ can be finitely characterized through the set
$D_{\cal E}$ of the {\em generating defaults} for ${\cal E}$ w.r.t.
$\Delta$ (the reader is referred to \citep{Rei80,ZhMa90} for definitions).


A finite propositional default theory $\Delta=(D,W)$ has an extension
$\cal E$ iff there exists a set $D_{\cal E} \subseteq D$, the
generating defaults of $\cal E$ w.r.t. $\Delta$, that can be
partitioned into a finite number of strata $D^{(0)}_{\cal
E},D^{(1)}_{\cal E},\ldots,D^{(n)}_{\cal E}$, such that:
\begin{itemize}
\item
$D^{(0)}_{\cal E} = \{ \delta \mid \delta\in D_{\cal E},
\pre(\delta)\in W^\ast \}$,
\item
for each $i$, $1\le i\le n$, $D^{(i)}_{\cal E} = \{ \delta \mid
\delta\in D_{\cal E} - \bigcup_{j=0}^{i-1} D^{(j)}_{\cal E},
\pre(\delta)\in (W \cup \concl(\bigcup_{j=0}^{i-1}D^{(j)}_{\cal
E}))^\ast \}$,
\end{itemize}
and, moreover:
\begin{itemize}
\item
$(\forall\delta\in D_{\cal
E})(\forall\beta\in\just(\delta))(\neg\beta\not\in(W\cup
\concl(D_{\cal E}))^\ast )$, and
\item
$(\forall\delta\in D)(\pre(\delta)\in(W\cup \concl(D_{\cal E}))^\ast
\wedge (\forall\beta\in\just(\delta))(\neg\beta\not\in(W\cup
\concl(D_{\cal E}))^\ast \Rightarrow \delta\in D_{\cal E})$.
\end{itemize}
If such a set $D_{\cal E}$ exists, then ${\cal E} = (W \cup \concl(D_{\cal E}))^\ast$ is an extension of $\Delta$.
Next we introduce a characterization of an extension of a finite DF propositional theory
which is based on a lemma from \citep{KaSe91}.
\begin{lemma}\label{ks1}
Let $\Delta = (D,W)$ be a DF default theory; then $\cal E$ is an
extension of $\Delta$ if and only if there exists a sequence of defaults
$\delta_1,...,\delta_n$ from $D$ and a sequence of sets
$E_0,E
_1,...,E_n$, such that for all $i > 0$:
\begin{itemize}
\item $E_0=W$,
\item $E_i=E_{i-1} \cup \concl(\delta_i)$,
\item $\pre(\delta_i) \in E_{i-1}$,
\item $(\not \exists c \in \just(\delta_i))(\neg c \in E_n)$,
\item $(\not \exists \delta \in D) (\pre(\delta)\in E_n \wedge
\concl(\delta) \not\subseteq E_n \wedge (\not\exists c \in
\just(\delta))(\neg c \in E_n))$,
\item $\cal E$ is the logical closure of $E_n$,
\end{itemize}
where $E_n$ is called the {\em signature set} of $\cal E$ and is
denoted $\liter{{\cal E}}$ and the sequence of rules
$\delta_1,...,\delta_n$ is the set $D_{\cal E}$ of generating
defaults of $\cal E$.
\end{lemma}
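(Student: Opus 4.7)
The plan is to verify Reiter's fixed-point definition of an extension directly against the sequence-based characterization in the hypothesis. Given a sequence $\delta_1,\ldots,\delta_n$ of defaults and a chain $E_0,\ldots,E_n$ of sets of literals satisfying the six conditions, I would set $\mathcal{E} = E_n^*$ and then unfold the Reiter approximation process $\hat{E}_0,\hat{E}_1,\ldots$ (where $\hat{E}_0 = W$ and each $\hat{E}_{i+1}$ adds the conclusions of those defaults whose prerequisite is in $\hat{E}_i^*$ and whose justifications are not negated in $\mathcal{E}$) and prove that $\bigcup_{i \geq 0} \hat{E}_i = \mathcal{E}$.

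For the forward inclusion $\mathcal{E} \subseteq \bigcup_i \hat{E}_i$, I would proceed by induction on the position $k \in \{0,\ldots,n\}$ of the sequence, establishing that for each $k$ there is some $j$ with $E_k \subseteq \hat{E}_j^*$. The base case $E_0 = W = \hat{E}_0$ is immediate. For the inductive step, the third clause guarantees $\pre(\delta_{k+1}) \subseteq E_k \subseteq \hat{E}_j^*$, so $\delta_{k+1}$ becomes eligible at the Reiter stage $j+1$ provided its justifications are consistent with $\mathcal{E}$; the fourth clause supplies exactly that, once one observes (see below) that $\neg c \notin E_n$ lifts to $\neg c \notin \mathcal{E}$. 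For the reverse inclusion $\bigcup_i \hat{E}_i \subseteq \mathcal{E}$, I would show by induction on $i$ that $\hat{E}_i \subseteq E_n^*$: any default $\delta$ that fires at a Reiter stage has its prerequisite in $E_n^*$ and its justifications consistent with $\mathcal{E}$, and the fifth (maximality/closure) clause then forces $\concl(\delta) \subseteq E_n$, so no novel literal can be introduced.

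The main obstacle is a clean bridging between the literal-level consistency check $\neg c \notin E_n$ that appears in the hypothesis and the extension-level check $\neg c \notin \mathcal{E}$ demanded by Reiter's definition. This step is the only place where disjunction-freeness is truly used: because $W$ is a set of literals and every $\concl(\delta_i)$ is a conjunction of literals, the signature set $E_n$ is already a set of literals, and any literal in $\mathcal{E} = E_n^*$ that is not trivially entailed reduces, via the consistency guaranteed by the fourth clause, to membership in $E_n$ itself. Once this bookkeeping is carried out, the two inclusions combine to yield $\bigcup_i \hat{E}_i = E_n^* = \mathcal{E}$, so $\mathcal{E}$ satisfies Reiter's recursive equation and is therefore an extension. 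A secondary subtlety worth flagging, but not a real difficulty, is that Reiter's process might try to fire defaults in an order different from $\delta_1,\ldots,\delta_n$; the hypothesis only asserts the existence of some admissible sequence, which is enough to drive both inductions above.
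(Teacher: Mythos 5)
The paper never proves this lemma: it is imported as a black box (``based on a lemma from \cite{KaSe91}''), so there is no in-paper argument to compare yours against. That said, your strategy --- setting $\mathcal{E}=E_n^*$, unfolding Reiter's approximation sequence, and establishing the two inclusions by induction on, respectively, the position in the given default sequence and the Reiter stage --- is the standard and correct route for this kind of characterization, and you correctly isolate the crux: translating the literal-level tests $\neg c\notin E_n$ of the hypothesis into the closure-level tests $\neg c\notin\mathcal{E}$ demanded by Reiter's definition, which is exactly where disjunction-freeness is consumed. The reverse inclusion via the fifth (closure) clause is also handled correctly.

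The one genuine gap is your claim that consistency of $E_n$ is ``guaranteed by the fourth clause.'' Clause 4 constrains only the \emph{justifications} of the applied defaults, and for general DF defaults justifications and consequents need not coincide. Take $W=\{a\}$ and $D=\bigl\{\frac{a:b}{c},\ \frac{a:b}{\neg c}\bigr\}$: the sequence $\delta_1,\delta_2$ with $E_2=\{a,c,\neg c\}$ satisfies every bulleted condition (no literal $\neg b$ ever appears, and both consequents lie in $E_2$, so clause 5 holds vacuously), yet $E_2$ is inconsistent, $E_2^*$ is the set of all formulas, and that is not an extension because $W$ is consistent (Reiter's Theorem 2.2). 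So your bridging step --- and indeed the lemma as literally stated --- breaks down for non-normal DF defaults. Your argument is sound precisely in the regime where the paper actually applies the lemma, namely normal defaults with consistent $W$: there $\just(\delta_i)=\concl(\delta_i)$, so clause 4 really does forbid any applied consequent from being contradicted in $E_n$, and consistency of $E_n$ follows from consistency of $W$. You should either add that hypothesis explicitly or derive the consistency of $E_n$ from normality rather than from clause 4 alone; everything else in your outline then goes through.
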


Although default theories are {\em non-monotonic}, normal default
theories satisfy the property of {\em semi-monotonicity} (see
Theorem 3.2 of \citep{Rei80}).

Semi-monotonicity in default logic means the following: Let
$\Delta=(D,W)$ and $\Delta'=(D',W)$ be two default theories such
that $D\subseteq D'$; then for every extension ${\cal E}$ of $\Delta$
there is an extension ${\cal E}'$ of $\Delta'$ such that ${\cal E} \subseteq
{\cal E}'$.

A default theory may not have any extensions (an example is the theory
$(\{\frac{:\beta}{\neg \beta}\},\emptyset)$. A default theory
is called {\em coherent} if it has at least one extension, and
incoherent otherwise. Normal default theories are always coherent. A
coherent default theory $\Delta = (D,W)$ is called {\em
inconsistent} if it has just one extension which is inconsistent. By
Theorem 2.2 of \citep{Rei80}, the theory $\Delta$ is inconsistent iff
$W$ is inconsistent.

The theories examined in this paper are always coherent and consistent,
since only normal default theories $(D,W)$ with
$W$ a consistent set of literals are taken into account.

The {\em entailment problem} for default theories is as follows:
Given a default
theory $\Delta$ and a propositional formula $\phi$, does every
extension of $\Delta$ contain $\phi$? In the affirmative case, we
write $\Delta \models \phi$. For a set of propositional formulas
$S$, we analogously write $\Delta \models S$ to denote $(\forall
\phi \in S)(\Delta \models \phi)$.

\subsection{Outliers in Default Logic}
\label{sect:outlier}

The issue of outlier detection in default theories has been extensively
discussed in \citep{ABP08}. The formal definition of outlier introduced in that paper is as explained next.
For a given set $W$ and a collection of sets
$S_1,\ldots,S_n$, $W_{S_1,\ldots,S_n}$ denotes the set $W \setminus
(S_1 \cup S_2 \cup \ldots \cup S_n)$.
\begin{definition} [Outlier and Outlier Witness Set]\rm\citep{ABP08}
\label{outlierD} Let $\Delta=(D,W)$ be a propositional default
theory and let $L \subseteq W$ be a set of literals.
If there exists a non-empty subset $S$ of $W_L$ such that:
\begin{enumerate}
\item $(D,W_S) \models \neg S$, and \item $(D,W_{S,L})
\not\models \neg S$
\end{enumerate}
then $L$ is an {\em outlier} set in $\Delta$ and $S$ is an {\em outlier
witness} set for $L $ in $\Delta$.
\end{definition}

The intuitive explanation of the different roles played by an outlier and
its witness is as follows. Condition ($i$) of Definition
\ref{outlierD} states that the outlier witness set $S$ denotes something
that does not agree with the knowledge encoded in the defaults.
Indeed, by removing $S$ from the
theory at hand, we obtain $\neg S$. In other words, if $S$ had not been explicitly
observed, then, according to the given defaults, we would have concluded the exact
opposite. Moreover, condition ($ii$) of Definition
\ref{outlierD} states that the outlier $L$ is a set of literals that, when
removed from the theory,
makes such a disagreement disappear.
Indeed, by removing both $S$ and $L$ from the theory, $\neg S$ is no
longer obtained. In other words, disagreement for $S$ is a
consequence of the presence of $L$ in the theory.
To summarize, the set $S$ witnesses that the piece of
knowledge denoted by $L$ behaves, in a sense, exceptionally,
thus telling us that $L$ is an outlier
set and $S$ is its associated outlier witness set.

The above intuition is illustrated by referring to the example
on stolen credit card numbers given in the introduction. A default theory
$\Delta = (D,W)$ that encodes that episode might be as follows:
\begin{itemize}
\item[--]
$D = \left\{ \frac{CreditNumber : \neg MultipleIPs}{\neg MultipleIPs}
\right\}$,
\item[--]
$W = \{CreditNumber, MultipleIPs\}$.
\end{itemize}
where, intuitively: (a) {\em CreditNumber} is true if a given credit card number has been used for purchasing something today; (b) {\em MultipleIPs} is true if that credit card number has been used from (computers located in) different continents today. Here, the credit card number might be stolen, for
otherwise it would not have been used over different continents during the same day. Accordingly, $L=\{CreditNumber\}$ is an outlier set here, and $S=\{MultipleIPs\}$ is the associated witness
set. This agrees with our intuition that an outlier is, in some sense, abnormal and that the corresponding witness testifies to it.
Note that sets of outliers and their corresponding witness sets are selected among those explicitly embodied in the given knowledge base. Hence, we look at outlier detection using default
reasoning essentially as a knowledge discovery technique, whereby abnormalities and their presumable associated explanations can be automatically singled out in the knowledge base at hand.
Several application examples showing the usefulness of the notion of outlier in default logics are given in
\citep{ABP08}.

\subsection{Outlier detection problems}

The intrinsic complexity of reasoning in default logics has been studied in some classical papers 
\citep{KaSe91,Stillman92,Gottlob92}.
The notion of outlier is based on entailment, but its structure makes the associated computational problems usually harder than basic entailment ones, as demonstrated in \citep{ABP08,ABP10}, where the complexity of discovering outliers in default theories under various classes of default logics has been investigated. The main recognition tasks in outlier detection
are the \textit{Outlier Existence}, \textit{Outlier Recognition} and the \textit{Outlier-Witness Recognition}
problems (also called $Outlier$, $Outlier(L)$ and $Outlier(S)(L)$, respectively,
in \citep{ABP08}), and are defined as follows:
\begin{itemize}
\item[-]
\textbf{Outlier-Witness Recognition Problem} ($Outlier(L)(S)$):
\textit{Given a default theory $\Delta=(D,W)$ and two sets of literals $L \subset W$ and $S \subseteq W_L$, is $L$ an outlier set
with witness set $S$ in $\Delta$?}
\item[-]
\textbf{Outlier Recognition Problem} ($Outlier(L)$):
\textit{Given a default theory $\Delta=(D,W)$ and a set of
literals $L \subseteq W$, is $L $ an outlier set in $\Delta$ (for some witness set $S$)?}
\item[-]
\textbf{Outlier Existence Problem} ($Outlier$):
\textit{Given a default theory $\Delta=(D,W)$ and a positive integer $k$,
is there any outlier set $L$ in $\Delta$ such that $|L|\le k$ (for some witness set $S$)? }
\end{itemize}
Table \ref{table:results} summarizes previous complexity results,
together with the results that constitute the contributions
of the present work that will be detailed later in this section.
\begin{table}[t]%
\footnotesize
\begin{center}
\caption{Complexity results for outlier detection ($^{*}$=reported in \citep{ABP08}, $^{**}$=reported in \citep{ABP10})\label{table:results}}{%
\begin{tabular}{|c|c||c|c|c|c|}
\hline
\multirow{3}{*}{\it Problem} & \multirow{3}{*}{\begin{tabular}{c}\it Outlier\\ \it Type\end{tabular}} &
\multirow{3}{*}{\begin{tabular}{c}\it General\\ \it Default\end{tabular}} &
\multirow{3}{*}{\begin{tabular}{c} DF\\ \it Default\end{tabular}} &
\multirow{3}{*}{\begin{tabular}{c} (D)NU\\ \it Default\end{tabular}} &
\textit{quasi-acyclic} \\
& & & & & (D)NU \\
& & & & & \textit{Default} \\
\hline\hline
\multirow{4}{*}{\it Outlier Existence} & \multirow{2}{*}{\it General} & $\rm\Sigma^P_3$-c & $\rm\Sigma^P_2$-c & NP-c & {NP-c} \\
& & Th. 4.1$^{*}$ & Th. 4.1$^{*}$ & Th. 3.5$^{**}$ & Th. \ref{th:strong_outlier_existence} \\
\cline{2-6}
& \multirow{2}{*}{\textit{Strong}} & \multicolumn{2}{|c|}{NP-hard} & {NP-c} & {NP-c} \\
& & \multicolumn{2}{|c|}{Th. \ref{th:out_cyc_exist}} & Th. \ref{th:out_cyc_exist} & Th. \ref{th:strong_outlier_existence} \\
\hline
\multirow{4}{*}{\it Outlier Recognition} & \multirow{2}{*}{\it General} & $\rm\Sigma^P_3$-c & $\rm\Sigma^P_2$-c & NP-c & \underline{NP-c} \\
& & Th. 4.3$^{*}$ & Th. 4.3$^{*}$ & Th. 3.6$^{**}$ & Th. \ref{th:out_acy_rec} \\
\cline{2-6}
& \multirow{2}{*}{\textit{Strong}} & \multicolumn{2}{|c|}{NP-hard} & \underline{NP-c} & \underline{P} \\
& & \multicolumn{2}{|c|}{Th. \ref{th:strout_cyc_rec}} & Th. \ref{th:strout_cyc_rec} & Th. \ref{th:strout_acy_rec} \\
\hline
& \multirow{2}{*}{\it General} & $\rm D^P_2$-c & $\rm D^P$-c & P & P \\
\it Outlier-Witness & & Th. 4.6$^{*}$ & Th. 4.6$^{*}$ & Th. 3.1$^{**}$ & Th. 3.1$^{**}$ \\
\cline{2-6}
\it Recognition & \multirow{2}{*}{\textit{Strong}} & \multicolumn{2}{|c|}{NP-hard } & P & P \\
& & \multicolumn{2}{|c|}{Lemma \ref{th:out_wit_rec_df}} & Lemma \ref{th:out_wit_rec} & Lemma \ref{th:out_wit_rec} \\
\hline
\end{tabular}}
\end{center}
\end{table}

In particular, the complexity of outlier detection tasks has been studied in
\citep{ABP08} for general
and disjunction-free (DF) default theories and in \citep{ABP10} for (dual) normal
unary default theories.
The results reported in the cited papers pointed out that
the general problem of recognizing an outlier set
is always intractable (Theorem 4.3 in \citep{ABP08} and Theorem 3.6 in
\citep{ABP10}).
As for recognizing an outlier together with its witness, this problem is
intractable for general and disjunction-free default theories (Theorem 4.6 in
\citep{ABP08}), but can be solved in polynomial time if NU (as well as DNU)
default theories are considered \citep{ABP10}.
Regarding the latter result,
it is worth recalling that, while for both NU and DNU default theories
the entailment of a literal can be decided in polynomial time, deciding the
entailment
in DF default theories is intractable \citep{KaSe91}.

\section{Computational and expressive properties of NMU and NU default theories}
\label{sect:lang_properties}

In this section we investigate some important properties of NMU and NU theories. Before that, it is worth pointing out that the results we shall prove to hold for NU (DNU, resp.) theories immediately apply to DNU (NU, resp.) theories, since given an NU (DNU, resp.) theory $\Delta$, the dual theory $\overline{\Delta}$ of $\Delta$ is obtained from
$\Delta$ by replacing each literal $\ell$ in $\Delta$ with $\neg \ell$ is a DNU (NU, resp.) theory that has the same properties of its dual.

Specifically, we shall first assess
to which extent the admittedly quite simple NMU and NU default theories
are powerful enough to solve (via entailment)
interesting computational problems,
hereby indirectly answering a
long-lasting questions posed by Kautz and Selman in \citep{KaSe91}.
We will do so
by first showing that the entailment problem on (quasi-acyclic) NMU theories is
\CONP-complete (Section \ref{sect:nmu_entail}), and then proving that
NU default theories (and, thereby, also NMU theories) are powerful enough to
express all decision problems included in $\NL$ (Section \ref{sect:nu_expr}).

Next, Section \ref{sect:incr} will present the \textit{Incremental Lemma},
which provides an interesting monotonicity characterization in NMU theories.

Then, in Section \ref{sect:technical} we will prove a technical lemma which
demonstrates how the evaluation of the truth value of a CNF formula under
a specific assignment can be accomplished in a way that relates it to the
definition of outliers.

Results presented here will be exploited in the rest of the paper
in order to characterize the complexity of outlier detection problems in default
theories.

\subsection{The complexity of the entailment problem for NMU theories}\label{sect:nmu_entail}

\begin{theorem}\label{th:acy_nmu_entail}
Let $\Delta$ be a NMU propositional default theory and let $l$ be a literal.
Then, the problem of telling if $\Delta \models l$ is co-NP-complete.
\end{theorem}
\begin{proof}
(Membership) Membership in co-NP follows immediately from membership in co-NP of the entailment problem for disjunction-free propositional
default theories \citep{KaSe91}.

(Hardness) Let $\Phi$ be a boolean formula in 3CNF on the set of variables
$X=x_1,\ldots,x_n$, such that $\Phi = C_1 \wedge \ldots \wedge
C_m$, with $C_k = t_{k,1} \vee t_{k,2} \vee t_{k,3}$, and each
$t_{k,1},t_{k,2},t_{k,3}$ is a literal on the set $X$, for
$k=1,\ldots,m$. The default theory $\Delta(\Phi) =
(D(\Phi),\emptyset)$ is associated with $\Phi$, where
$D(\Phi)$ is $D_1 \cup D_2 \cup D_3$, with:
\begin{eqnarray*}
D_1 & = & \left\{ \frac{:x_i}{x_i}, \frac{:\neg
x_i}{\neg x_i} \mid i=1,\ldots,n
\right\}, \\
D_2 & = & \left\{ \frac{t_{k,j} : c_k}{c_k} \mid
k=1,\ldots,m;
j=1,2,3 \right\}, \mbox{ and} \\
D_3 & = & \left\{ \frac{:\neg c_k}{\neg c_k}, \frac{\neg
c_k:l}{l} \mid k=1,\ldots,m
\right\},
\end{eqnarray*}
where $l$ is a new letter distinct from those
occurring in $\Phi$. It is shown next that $\Phi$ is unsatisfiable
iff $\Delta(\Phi) \models l$.
We recall that
the unsatisfiability problem of a 3CNF is a well-known co-NP-complete problem.
Consider a generic extension $\cal E$ of $\Delta(\Phi)$. From the rules
in the set $D_1$, $\cal E$ is such that for each $i=1,\ldots,n$,
either $x_i\in E$ or $\neg x_i\in E$.

(Only If part) Suppose that $\Phi$ is unsatisfiable. Then, for
each truth assignment $T$ on the set of variables $X$, there exists
at least a clause, say $C_{f(T)}$, $1\le f(T)\le m$, that is not
satisfied by $T$. Because of the rules in the set $D_2$,
$c_{f({\cal T}_{E \cap (X \cup \neg X)})} \not\in {\cal E}$, and
from rules in the set $D_3$, $\neg c_{f({\cal T}_{{\cal E} \cap (X \cup
\neg X)})} \in {\cal E}$ and $l\in {\cal E}$.

(If part) Suppose that $\Delta(\Phi) \models l$. Then, for each extension
$\cal E$ of $\Delta(\Phi)$, there exists $g({\cal E})$, $1\le g({\cal E})\le m$, such
that $\neg c_{g({\cal E})}\in {\cal E}$. For each truth assignment $T$ on the
set of variables $X$, let $E(T)$ denote the set containing
all the extensions $\cal E$ of $\Delta(\Phi)$ such that ${\cal E} \supseteq
Lit(T)$. Then, for each ${\cal E} \in {E}(T)$, $\neg c_{g({\cal E})} \in
{\cal E}$ implies that none of the rules in the set $D_2$ having $c_{g({\cal E})}$
as their conclusion belong to the set of generating defaults of
$\cal E$. Thus, the clause $C_{g({\cal E})}$ is not satisfied by $T$. As this
holds for each truth assignment $T$,
$\Phi$ is unsatisfiable.
\end{proof}

From the above result, the complexity
of the entailment problem for propositional quasi-acyclic NMU default theories can be promptly derived:

\begin{corollary}\label{th:nmu_entail}
Let $\Delta$ be a quasi-acyclic NMU propositional default theory and let $l$ be a literal.
Then, the problem of telling if $\Delta \models l$ is co-NP-complete.
\end{corollary}
\begin{proof}
The statement immediately follows by noting that the theory employed in the reduction of
Theorem \ref{th:nmu_entail} has tightness $1$.
\end{proof}

For the sake of completeness, before closing this section, the cost of the entailment problem for NU theories is recalled next.

\begin{proposition} {\rm (proved in
\citep{KaSe91,Ben02})}\label{th:nuentail}
\label{tracKS}\label{tracBen} Let $\Delta$ be a NU propositional default theory and let $L$ be a set of literals.
Deciding if $\Delta\models L$ can be done in time ${\cal O}(n^2)$, where $n$ is the size of the theory $\Delta$.
\end{proposition}

\subsection{The Expressive Power of Normal Unary Default Theories}\label{sect:nu_expr}

This section is devoted to show that, in fact, the yet very simple default theories we mainly refer to in this paper are actually rich enough to express interesting knowledge.

To this end,
we follow the literature (see \citep{DantsinEGV01} and informally
define the {\em expressive power} of a (logic) language as the set of
properties over finite structures that language allows to express.
We assume an arbitrarily large but finite domain of constants $\cal U$ to be given
over which input finite structures
are defined \citep{CadoliEG97}. Here, the input finite
structure $\cal S$ will be assumed to be encoded into a fixed-schema
{\em completed} relational database\footnote{We assume here that the reader is
familiar with basic notions regarding relational database theory. See, e.g.,
\citep{AbiteboulHV95} for an excellent source of material on
this topic.} $db_{\cal S}$ and, accordingly, the expression of the language
in question expressing a given property will be regarded as a query to be
evaluated against such an input database.
By {\em completed} relational database we mean a database $db$ for which the following holds:
for each relation $R$ in $db$ there is a relation in $db$, we shall call it complement of $R$ and denote it by $\overline{R}$, with the same schema as $R$, such that $\overline{R} = \{{\bf
t} \in {\cal U} \times \ldots \times {\cal U} \mid {\bf t} \not\in R\}$. In other words, in a completed database, we will find, together with each relation, also its complement with respect to
active tuple domain ${\cal U} \times \ldots \times {\cal U}$.
From here one, we shall identify
the finite structure $\cal S$ with its completed relational encoding $db_{\cal S}$.
%

We shall say that the database $db_{\cal S}$ is {\em ordered} if its schema contains
relations $First$, $Succ$ and $Last$ with the meaning of encoding a
complete ordering of all domain elements included in $\cal U$.

A (boolean) query $q$ expressed in the given language defines a {\em
generic mapping} $m_q$ that associates to each input database $db_{\cal S}$ (over the given fixed schema) a boolean value
$m_q(db_{\cal S}) \in \{true, false\}$\footnote{We recall that genericity
means invariance under domain isomorphisms. Moreover, we recall that queries
are usually and more generally defined as mappings from databases to
databases. Here, we stick to Boolean queries as these are sufficient for
foregoing presentation.}.

All that given, by the {\em expressive power} of a
language $\cal L$ we mean the set of mappings $m_q$ for all queries $q$
expressible in $\cal L$ by some query expression $E$ (by abusing notation,
this expression is usually identified with the query $q$ it defines and
therefore called a query itself).

Next, we relate NU default theories to queries, as follows. A {\em NU query
form} $qf$ is a pair $\langle \Delta_{qf}, a_{qf} \rangle$, where $a_{qf}$
is a positive literal and $\Delta_{qf}$ is a NU default theory of the form
$(\emptyset, D_{qf})$, for some set of NU defaults $D_{qf}$.
A query form
defines a query over relational databases, as follows. For each database
$db$, let $W_{db}$ be the set of (positive) literals
naturally encoding $db$ in logical form as $W_{db} = \{ R({\bf a}) \mid \mbox{$R$
is a relation in $db$ and } {\bf a} \in R\}$. Then, for each database $db$,
the query form $qf$ defines the query $q_{qf}$ such that $q_{qf} (db) =
true$ if and only if $(W_{db},D_{qf}) \entails a_{qf}$.

Now that the mechanism relating default theories and queries over finite
structures has been established, we can discuss the
expressive power of NU default theories.

In order to asses the expressive power of the languages we are interested in,
we shall refer to descriptive complexity theory \citep{Ebbin95} according to which this is
measured by relating it to complexity classes.
Informally speaking, as
reported in \citep{DantsinEGV01}, in order to prove that a language
expresses a complexity class $C$ one has to show that every $C$-machine
working on a (proper encoding of a)   finite structure can be represented by
an expression of the language. An alternative way to go is to show that the
language at hand is equivalent to (or subsumes) another language whose expressive power
has already been established.  We will use the latter strategy.

To this end, we consider the language $datalog^+$,
that is to say the language $datalog$ augmented with
the possibility of specifying negated extensional
predicates in rule bodies \citep{AbiteboulHV95} 
and, specifically, the sublanguage $datalog^+(1)$, that is, the fragment of $datalog^+$ where
each rule has at most one intentional  predicate in the body.

The following result is an immediate consequence of Theorem 8 in \citep{DantsinEGV01}.

\begin{theorem} {\rm(cf. Theorem 8 in \citep{DantsinEGV01})}
The language $datalog^+(1)$ captures NL over finite
structures encoded as completed relational databases on ordered domains.
\end{theorem}

Next, we show that the language of NU default theories
is at least as expressive as $datalog^+(1)$ and, as such, expresses all NL queries over completed ordered databases.

\noindent
{\bf Notation.} In the following, in order to make the presentation more easily followed, we shall often use theories where defaults contain variables: these are to be inteded simply a shorthands for the set of ground (that is, propositional) defaults obtained from those by substituting in all consistent ways variables with constants appearing in the theory.

\begin{theorem}\label{th:nu_expr}
On completed ordered databases, NU default theories are at least as expressive as $datalog^+(1)$.
\end{theorem}
\begin{proof}
In order to prove our result, we will show that for
each $datalog^+(1)$ program $P$ defining,
together with the letter $g$, the query $q_P$ over ordered relational databases
(i.e., for each database $db$, $q_P(db) = true$ iff $P \cup W_{db} \entails g$),
there exists a NU
query form $qf$ such that $q_{qf}$ coincides with $q_P$.

While sticking with propositional defaults only, for the sake of the
presentation, we shall use below variables in defaults: wherever variables
appear, a default will be intended as the mere representation of the set of
propositional defaults obtained by substituting in any possible consistent
ways those variables with elements from $\cal U$.
Moreover, references to single atom containing variables
have to be intended to represent a certain propositional atom
obtained by substituting variables therein appearing
with elements from $\cal U$.

Let us illustrate next the translation of a generic rule included in $P$ into a set
of defaults. Let $r$ be the generic rule of the following form:
\[ r \equiv p({\bf Y}) \leftarrow q({\bf X}_0),
a_1({\bf X}_1), \ldots , a_m({\bf X}_m),\nbd a_{m+1}({\bf X}_{m+1}), \ldots, \nbd a_n({\bf X}_n), ~(0\le m\le n) \]
where $q$ is an intensional (i.e., non-database) predicate,
and the $a_i$ denote extensional (i.e., database) predicates
(all the predicates occurring in the body of the rule are optional).

The rule $r$ is translated into the set of defaults $D_r$,
consisting of the following default rules:
\begin{itemize}
\item
$\delta^r_i \equiv \displaystyle \frac{\overline{a}_i({\bf X}_i):
\neg p_r({\bf Z})}
{\neg p_r({\bf Z})}$
~$(1 \leq i \leq m)$,
\item
$\delta^r_j \equiv \displaystyle \frac{a_j({\bf X}_j):
\neg p_r({\bf Z})}
{\neg p_r({\bf Z})}$
~$(m+1 \leq j \leq n)$,
\item
$\delta^r_0 \equiv \displaystyle \frac{q({\bf X}_0):
p_r({\bf Z})}
{p_r({\bf Z})}$, ~and
\item
$\delta^r \equiv \displaystyle \frac{p_r({\bf Z}): p({\bf Y})}{p({\bf Y})}$.
%
%
\end{itemize}
where $\bf Z$ denotes the set of variables ${\bf X}_0\cup \ldots\cup {\bf X}_n$
occurring in the rule $r$, among which an arbitrary ordered is assumed,
$\overline{a}_i$ denotes the complemental letter of $a_i$
and $p_r$ is a novel predicate name associated with the rule $r$.

Notice that the same translation schema as above applies to recursive rules possibly occurring in $P$.
Accordingly, the $datalog^+(1)$ program $P$ is translated into the set of defaults $D_P$ obtained by translating all its rules into defaults.

We claim that the NU query form $qf = \langle (D_P, \emptyset), g \rangle$ is such that, for any finite
database $db$, $q_P(db) = q_{qf}(db)$. In other
words, we have to show that $g$ belongs to the minimal model of the program
resulting by adding $W_{db}$ to $P$ if and only if $g$ is a
cautious consequence of the NU default theory $\Delta_P = (W_{db},D_P)$.

In what follows, we will refer to the defaults $\delta^r_i$ with $i>0$, as
\textit{extensional defaults} (since an extensional
predicate occurs in their prerequisite), and to the remaining ones,
that is the defaults $\delta^r_0$ and $\delta^r$, as
\textit{intensional defaults} (since only intensional
predicates occur in these rules).

Given a generic extension $E$ of $\Delta_P$, let $E_P$
denote the subset of $E$ consisting of the positive literals
whose extensional predicate occurs in the program $P$.

In what follows, for simplicity,
when we speak of the minimal model of
$P \cup W_{db_{\cal S}}$,
we will take into account only intensional predicates,
since the extensional component of the minimal model of $P$
is always given by $W_{db_{\cal S}}$.

Consider the set ${\cal E}^{\bot}$ of extensions $E$ of the default theory $\Delta_P$, having the property that
there is no intensional default preceding an extensional one in the sequence $D_E$ of generating defaults of $E$.
\begin{claim}\label{clB}
The set ${\cal E}^\bot$ contains a unique extension.
\end{claim}
\begin{proof}
Consider a generic extension $E$ of ${\cal E}^\bot$.
First of all, notice that extensional defaults contain in their prerequisite
only extensional predicates and in their conclusion only literals of the form
$\neg p_r({\bf Z})$.

Since extensional defaults precede intensional ones in the sequence of
generating defaults $D_E$ associated with $E$,
it follows that $E$ contains all the literals the form $\neg p_r({\bf Z})$
occurring in at least one extensional default of $\Delta_P$ such that the
extensional atom in the
corresponding prerequisite is true, say this set $Neg$. Due to the maximality
property of extensions, $Neg$ is the maximal set of negative literals
that can be deduced by means of the defaults in $\Delta_P$.

As for the positive literals occurring in $E$,
they will be the maximal set of positive literals that can be inferred by means
of the defaults in $\Delta_P$
after that all the literals of the form $\neg p_r({\bf Z})$ in $Neg$ are assumed
to be part of the extension.
Say this set of positive literals $Pos$.

Thus, all the sequences of generating defaults such that extensional defaults
precede intensional ones
lead to the same sets $Neg$ and $Pos$ and, hence, to the same extension.
In other words, these sequences are equivalent, in terms of the extension that
they produce,
up to permutations that preserve the property that extensional defaults precede
intensional ones.

Thus, it can be concluded that all the extensions in ${\cal E}^\bot$
must coincide, and ${\cal E}^\bot$ contains one single extension. 
\end{proof}

In the following, with a little abuse of notation, we will denote by ${\cal E}^{\bot}$ the only
extension belonging to ${\cal E}^{\bot}$.

The following result states that ${\cal E}^\bot_P$ represents the set of all and only the intensional atoms
entailed by the theory $\Delta_P$.

\begin{claim}
Let $g$ be an extensional atom. Then, $\Delta_P \entails g$ if and only if $g \in {\cal E}^\bot_P$.
\end{claim}
\begin{proof}
From what stated in Claim \ref{clB}, since $Neg$ is the maximal set of negative literals that
can be inferred from $\Delta_P$, it follows that the set $Pos$, consisting of the positive
intensional literals contained in ${\cal E}^\bot$, is contained in every extension of $\Delta_P$. 
\end{proof}

It remains to show that the minimal model of $P\cup W_{db_{\cal S}}$
coincides with ${\cal E}^\bot_P$. We preliminarily need to prove a further
technical result.

Let $P$ be a $datalog(1)$ program over a ordered completed database $db$. Consider the program $P^{db}$ obtained from $P$
by (1) deleting all the rules whose body is falsified by at least one extensional atom in $db$, and
(2) removing all the occurrences of extensional atoms in the remaining rules.

\begin{claim}
The minimal model ${\rm MM}(P\cup db)$ of $P\cup db$ coincides with ${\rm MM}(P^{db})\cup db$, where ${\rm MM}(P^{db})$ denotes
the minimal model of $P^{db}$.
\end{claim}
\begin{proof}
Let $I'$ be a generic interpretation of $P^{db}$. The interpretation $I=I'\cup db$ is such that
the rules whose body is falsified by an extensional atom in $db$ are true in $I$.

As for the remaining rules, consider a generic interpretation
$I$ of $P\cup db$ and a rule $r$ of $P$.
Let $r'$ denote the rule $r$ obtained from $r$
by removing all the occurrences of extensional atoms
therein appearing.
It can be verified that $r$ is true in $I$
if and only if $r'$ is true in $I' = I\setminus db$.

Thus, it can be concluded that there is a one-to-one correspondence between
models of $M$ of $P\cup db$ and models $M'=M\setminus db$
of $P^{db}$,
from which the theorem statement immediately follows. 
\end{proof}

Now, notice that by definition of ${\cal E}^\bot$,
for any rule $r$ falsified
by an extensional atom,
the literal $\neg p_r({\bf Z})$ belongs to
${\cal E}^\bot$.
Hence, it follows that $Pos$, the set of all and only the positive literals
occurring in ${\cal E}^\bot$,
coincides with the unique extension
of the default theory $(D_P^{db},\emptyset)$,
with $D_P^{db}$ consisting of
the intensional default rules
associated with rules $r$ belonging to $P^{db}$.
Moreover, since the positive literals $p_r({\bf Z})$ are
distinguished atoms, each associated with a different rule $r$,
the set of defaults $D_P^{db}$ is equivalent in its turn
to the set $D_P^{db'}$ obtained from $D_P^{db}$
by substituting each pair of defaults of the form
$\frac{q({\bf X}_0):p_r({\bf Z})}{p_r({\bf Z})}$,
$\frac{p_r({\bf Z}):p({\bf Y})}{p({\bf Y})}$
with the default
$\frac{q({\bf X}_0):p({\bf Y})}{p({\bf Y})}$.

The last step consists in showing that the unique extension
of $(D_P^{db'},\emptyset)$ coincides with
the minimal model of $P^{db}$.
Since no negative literal occur in $(D_P^{db'},\emptyset)$,
then its extension must coincide with
the minimal model of the logic program $P'$
obtained by mapping each default
$\frac{q({\bf X}_0):p({\bf Y})}{p({\bf Y})}$
in $D_P^{db'}$ to the rule
$p({\bf Y}) \leftarrow q({\bf X}_0)$.
The result follows,
since the program $P'$ is exactly $P^{db}$.
Thus, the minimal model of
$P\cup W_{db_{\cal S}}$
coincides with ${\cal E}^\bot_P$,
and this concludes the proof. 
\end{proof}


To provide an example of the translation exploited in Theorem
\ref{th:nu_expr}, we next consider the \textit{reachability problem}:
given a graph $G$, encoded in a database $db_G$
by means of the binary relation
$arc(X,Y)$ (representing the fact that there exits an
arc in $G$ from the generic node $X$ to the generic node $Y$),
and two nodes $s$ and $t$ of $G$,
decide whether $t$ is reachable from $s$ in $G$ or not, that is to say,
decide if there exists a path joining $s$ with $t$ in $G$.

The following $datalog^+(1)$ program
\[ P : \left\{
\begin{array}{l}
r_1 : path(X,Y) \leftarrow arc(X,Y)\\
\\
r_2 : path(X,Y) \leftarrow path(X,Z), arc(Z,Y)
\end{array}
\right. \]
can be employed to solve the reachability problem,
since it holds that $P\cup db_G\entails path(s,t)$ if and only if
$t$ is reachable from $s$ in $G$.
According to Theorem \ref{th:nu_expr},
the rule $r_1$ is translated into the following defaults:
\[ \delta_1^{r_1} \equiv \frac{\overline{arc}(X,Y) : \neg path_1(X,Y)}{\neg path_1(X,Y)},
\delta_0^{r_1} \equiv \frac{: path_1(X,Y)}{path_1(X,Y)}, \mbox{ and} \]
\[ \delta^{r_1} \equiv \frac{path_1(X,Y) : path(X,Y)}{path(X,Y)}, \]
while the rule $r_2$ is translated into the following defaults:
\[ \delta_1^{r_2} \equiv \frac{\overline{arc}(Z,Y) : \neg path_2(X,Z,Y)}{\neg path_2(X,Z,Y)},
\delta_0^{r_2} \equiv \frac{path(X,Z) : path_2(X,Z,Y)}{path_2(X,Z,Y)}, \mbox{ and} \]
\[ \delta^{r_2} \equiv \frac{path_2(X,Z,Y) : path(X,Y)}{path(X,Y)}, \]
and the corresponding NU query form $qf$
is $qf = \langle (D_{qf},W_{db_G}), path(s,t) \rangle$,
with $D_{qf}$ the defaults as above.
The propositional version of $qf$ can be
obtained, once the finite domain of constants $\cal U$
is given, by substituting variables with the constants of $\cal U$
in all the possible consistent ways.

\paragraph*{Observation} For the sake of completeness, we note that, actually,
NU default theories are capable of simulating a limited form of conjunction and negation that could
possibly appear in datalog rules. More precisely, it is possible to correctly
deal with the language $datalog^+(1)$, a generalization of $datalog$ where the
negation of extensional predicates are allowed to occur in rule bodies. Notice,
however, that the translation does not deal with the problem of generating
complement relations on their own, so that to represent input finite structures
via completed databases is needed.

\subsection{The Incremental Lemma}
\label{sect:incr}

This section is devoted to proving the \textit{Incremental Lemma},
which is the basic result used to single out tractable cases of
outlier detection problems.
Alongside, the Incremental
Lemma provides an interesting monotonicity characterization in NMU
theories which is valuable on its own.
\begin{definition}[Proof]\rm
\label{proof} Let $\Delta=(D,W)$ be an NMU default theory, let $l$
be a literal and $E$ a set of literals. A {\em proof} of $l$
w.r.t. $\Delta$ and $E$ is either $l$ by itself, if $l \in W$, or
a sequence of defaults $\delta_1,...,\delta_n$, such that the
following holds:
($1$) $l$ is the consequence of $\delta_n$,
($2$) $\neg l \not\in E$, and
($3$) for each $\delta_i$, $1 \leq i \leq n$, either $\delta_i$ is
prerequisite-free, or $\pre(\delta_i) \in W$, or $\delta_1,...,\delta_{i-1}$
is a proof of $\pre(\delta_i)$ w.r.t. $\Delta$ and $E$.
A proof is {\em minimal} if it is not possible to make it shorter by
deleting a default from it.
\end{definition}
\begin{lemma}\rm \citep{BeDe94} \it
\label{pr} Let $\Delta=(D,W)$ be an NMU default theory, let $l$ be a
literal and $E$ an extension of $\Delta$. Then $l$ is in $E$
iff there is a proof of $l$ w.r.t. $\Delta$ and $E$.
\end{lemma}
\begin{definition}[Satisfaction of an NMU default]\rm
A set of literals $E$ {\em satisfies} an NMU default
$\delta=\frac{y:x}{x}$ iff at least one of the following three
conditions hold: {\rm ($1$)} $y \not\in E$, or {\rm ($2$)} $\neg x
\in E$, or {\rm ($3$)} $x \in E$.
\end{definition}
\begin{theorem}\rm \citep{BeDe94}\it
\label{bd} A
set of literals $E$ is an extension of an 
NMU consistent default theory $(D,W)$ iff the following holds: {\rm
($1$)} $W \subseteq E$, {\rm ($2$)} $E$ satisfies every default
in $D$,
and {\rm ($3$)} every literal in $E$ has a proof w.r.t $(D,W)$ and
$E$.
\end{theorem}
The Incremental Lemma, reported below, characterizes a
monotonic behavior of NMU theories.
\begin{lemma}[The Incremental Lemma]
\label{incremental} Let $(D,W)$ be an NMU default theory, $q$
a literal and $S$ a set of literals such that $W \cup S$ is
consistent and $S$ does not influence $q$ in $(D,W)$. Then the
following hold:
\begin{description}
\item[\it Monotonicity of brave reasoning]
If $q$ is in some extension of $(D,W)$ then $q$ is in some extension
of $(D,W \cup S)$.
\item[\it Monotonicity of skeptical reasoning]
If $q$ is in every extension of $(D,W)$ then $q$ is in every
extension of $(D,W \cup S)$.
\end{description}
\end{lemma}
\begin{proof}
(\textbf{Monotonicity of brave reasoning})
Suppose that for some extension $\tag{E}$ of $(D,W)$, $q \in \tag{E}$, and let
$\sigma= \delta_{1},..., \delta_{k}$ be a proof of $q$ w.r.t $(D,W)$ and $\tag{E}$.
Let $E_\sigma$ be an extension of $\Delta=(\sigma, W \cup S)$. We will show that $\sigma$ is also a proof of $q$ w.r.t. $\Delta$ and $E_\sigma$.
Clearly each prefix of $\sigma$ is a proof w.r.t $(D,W)$ and $\tag{E}$. We prove that each prefix is also a proof w.r.t. $\Delta$ and $E_\sigma$. The proof is by induction on the size of the prefix.
\begin{description}
\item[Case the prefix is of size 1]
In this case, the proof is either a literal that belongs to $W$,
or a default of the form $l:t/t$, where $l \in W$ and $t$ is a literal.
If the proof is a literal that belongs to $W$, it obviously
belongs also to $W \cup S$. Suppose the consequence
of $\delta$ is some literal $t$. It cannot be the case that
a default having a consequence $ \neg t$ belongs to $\sigma$
because $\sigma$ is a proof and $\delta$ is in $\sigma$. It also
cannot be the case that $ \neg t \in W$ because $\sigma $
is a proof w.r.t. $(D,W)$. Last, it also cannot be the case that
$\neg t \in S$ because $S$ does not influence $q$ in $(D,W)$ and
$\delta$ is part of the proof of $q$ w.r.t. $(D,W)$ and $\tag{E}$.
Hence $\delta$ is applicable in $E_\sigma$, and so $t \in E_\sigma$.
\item[Case the prefix is of size greater than 1]
Suppose $\delta$ is the last default in the prefix.
If $\delta$ has a prerequisite $l$, then by the induction
hypothesis, $l \in E_\sigma$. Suppose the consequence of
$\delta$ is some literal $t$.
By arguments similar to the induction base case, $\delta$ is applicable in $E_\sigma$,
and so $t \in E_\sigma$.
\end{description}
Since $\sigma \subseteq D$, by semi-monotonicity of normal default theories, there is an extension
$ E $ of $(D,W \cup S)$ such that $E_\sigma \subseteq E$. Since $q \in E_\sigma$, $q \in E$.

(\textbf{Monotonicity of skeptical reasoning})
Suppose that $q$ is in every extension of $(D,W)$ and
assume conversely that there is an extension $\tag{E}$ of $(D,W \cup S)$ such that $q \notin \tag{E}$. Let $\sigma=\delta_1,...,\delta_n$ be a sequence of generating defaults of $\tag{E}$ as defined in Lemma \ref{ks1}.
$\sigma$ will be
modified so that it will not have defaults with consequences that are influenced by $S$. This is done as follows:
\begin{enumerate}
\item Delete from $\sigma$ all rules of the form $:l/l$ or $t:l/l$ where $l$ belongs to $S$.
\item For $h=1$ to $n$, if $\delta_h$ was not deleted in the previous step and the prerequisite of $\delta_h$ is
not in $W$ and not a consequence of any default which is
before $\delta_h$ in $\sigma$ and was not deleted, then delete $\delta_h$ from $\sigma$.
\end{enumerate}
Let $\sigma_S$ be the sequence of defaults left in $\sigma$ after the modification described above.
Let $E_\sigma$ be an extension of $(\sigma_S, W)$.
\begin{claim}
\label{m}
For every default $\delta$ in $\sigma_S$, the consequence of $\delta$ is in $E_\sigma$.
\end{claim}
Proof:
By induction on the index of $\delta$ in the sequence $\sigma_S$.
\begin{description}
\item[Case the index of $\delta$ is 1]
If $\delta$ has a prerequisite $l$ then, by construction of $\sigma_S$, $l$ must belong to $W$. Suppose the consequence of $\delta$ is some literal $t$. It cannot be the case that a default having a consequence $ \neg t$ belongs to $\sigma_S$ because $\sigma_S$ is a subset of a set of generating defaults and $\delta$ is in $\sigma_S$. It also cannot be the case that $ \neg t \in W$ because $\sigma_S$ is a subset of the generating defaults of $(D,W \cup S)$ and $\delta$ is in $\sigma_S$. Hence, $\delta$ is applicable in $E_\sigma$, and so $t \in E_\sigma$.
\item[Case the index of $\delta$ is greater than 1]
By the induction hypothesis, if $\delta$ has a prerequisite, it has to belong to $E_\sigma$. By arguments similar to the ones given in the induction base part, the consequence of $\delta$ belongs to $E_\sigma$.
\end{description}
By
semi-monotonicity of normal default theories, there is an extension
$E$ of $(D,W)$ such that $E_\sigma \subseteq E$.
By the assumption that $q$ belongs to every extension of $(D,W)$, $q \in E$. Then, by Lemma \ref{pr}, there is a sequence of defaults $\pi=\delta_1,...,\delta_k$ such that $\pi$ is a proof of $q$ w.r.t. $(D,W)$ and $E$. Since $q \notin \tag{E}$, for some $1 \leq j \leq k$, $\delta_j \notin \sigma_S$. Let $i$ be the minimum index such that $\delta_i \in \pi$ and $\delta_i \notin \sigma_S$. It must be the case that $\delta_i = l:t/t$ where $l$ might be empty.
We now consider two cases.
\begin{description}
\item[Case $\delta_i$ is applicable in $\tag{E}$]
If this is the case, $\delta_i$ is one of the generating defaults of $\tag{E}$ or, in other words, $\delta_i \in \sigma$. Since $\delta_i \notin \sigma_S$, and $\delta_i \in \sigma$, it must be the case that $S$ influences $t$ in $(D,W)$. But $\delta_i$ is a default in a proof of $q$ w.r.t. $(D,W)$ and $E$, so $S$ influences $q$ in $(D,W)$, a contradiction.
\item[Case $\delta_i$ is {\em not} applicable in $\tag{E}$]
Since $i$ is the minimal index such that $\delta_i \notin \sigma_S$,
it must be the case that $\neg t$ belongs to $\tag{E}$ (for, otherwise, $\delta_i$
would have been in the generating defaults of $\tag{E}$, and
since $S$ does not influence $t$ in $D$, $\delta_i$ would belong to $\sigma_S$).
Since $\delta_i=l:t/t$ is part of a proof of $q$ w.r.t.
$(D,W)$ and $E$, it must be the case that $ \neg t \notin W$.
Clearly, since $S$ does not influence $t$ in $(D,W)$,
$\neg t$ is not in $S$ either. But $\delta_i$ is {\em{not}}
applicable in $\tag{E}$, which is an extension of $(D, W \cup S)$,
so there must be a rule $\delta \in D$ such that $\delta$ is
applicable in $\tag{E}$ and the consequence of $\delta$ is $\neg t$. So $\delta$ is one of the generating defaults of $\tag{E}$.
Since $S$ does not influence $t$ in $D$, $\delta$ must belong to
$\sigma_S$. By Claim \ref{m} above, $ \neg t$ belongs to
$E_\sigma$, and by semi-monotonicity, $ \neg t$ belongs to $E$.
So $\delta_i=l:t/t$ cannot be part of a proof of $q$ w.r.t. $(D,W)$ and $E$, a contradiction.
\end{description}
\end{proof}

\subsection{On CNF evaluation and the definition of outlier}
\label{sect:technical}

The lemma introduced next shows how
the evaluation of the truth value of a CNF formula
under a specific assignment can be accomplished in a way
that relates it to the definition of outlier.

Let $T$ be a truth assignment to the set $x_1, \ldots, x_n$ of
boolean variables. Then $Lit(T)$ denotes the set of literals $\{
\ell_1, \ldots, \ell_n \}$, such that $\ell_i$ is $x_i$ if $T(x_i) =
{\bf true}$ and it is $\neg x_i$ if $T(x_i) = {\bf false}$, for
$i=1,\ldots,n$.

Let $L$ be a consistent set of literals. Then ${\cal T}_L$ denotes
the truth assignment to the set of letters (boolean variables)
occurring in $L$ such that, for each positive literal $p \in L$,
${\cal T}_L(p) = {\bf true}$, and for each negative literal $\neg p
\in L$, ${\cal T}_L(p) = {\bf false}$.

Lemma \ref{lemma:intract} below
states the connection between  the evaluation of the truth value of a CNF formula under
a specific assignment and the definition of outliers.

\begin{lemma}\label{lemma:intract}
For each boolean formula $\Phi$ in 3CNF having $m$ conjuncts, there exists
a NU propositional default theory $(D(\Phi),W(\Phi))$
whose size is polinomially bounded in $\Phi$,
a set of literals $S(\Phi) \subseteq W(\Phi)$,
and a set of letters $c_1, \ldots, c_m$ occurring in $D(\Phi)$,
such that $\Phi$ is satisfiable if and only if
$(D(\Phi),W(\Phi)_{S(\Phi)}) \models \neg (S(\Phi) \cup \{c_1, \ldots, c_m\})$.
\end{lemma}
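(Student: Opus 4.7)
My plan is to give an explicit polynomial-size NU encoding of $\Phi$ and analyze its extensions through the characterization in Lemma \ref{ks1}. Given $\Phi$ in $3$CNF with variables $x_1,\ldots,x_n$ and clauses $C_1,\ldots,C_m$, I would introduce, for each variable $x_i$, two fresh positive letters $t_i, f_i$ (representing ``$x_i$ true'' and ``$x_i$ false''); for each clause $C_j$, a fresh letter $c_j$; plus the auxiliary letters that are needed to simulate conjunctive prerequisites inside the NU format. A natural first attempt is to take $W(\Phi)$ to contain the observation literals $\{t_i, f_i\}_i$ together with a sentinel $w$, and let $S(\Phi) = W(\Phi)$, so that $W(\Phi)_{S(\Phi)} = \emptyset$ and the defaults carry the full inferential load.

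The defaults in $D(\Phi)$ would fall into three NU groups. A guessing+mutex block of the form $\frac{:t_i}{t_i}$, $\frac{:f_i}{f_i}$, $\frac{t_i:\neg f_i}{\neg f_i}$, $\frac{f_i:\neg t_i}{\neg t_i}$ makes each extension pick exactly one of $\{t_i, f_i\}$ and derive the negation of the other, so that by Lemma \ref{ks1} the extensions of $(D(\Phi), W(\Phi)_{S(\Phi)})$ are in correspondence with truth assignments to $\Phi$. A clause-evaluation block providing, for each literal $\ell_{j,k}$ of $C_j$, the rule $\frac{v_{j,k}:\neg c_j}{\neg c_j}$ (with $v_{j,k} = t_i$ if $\ell_{j,k}=x_i$ and $v_{j,k} = f_i$ if $\ell_{j,k}=\neg x_i$) together with a defeasible $\frac{:c_j}{c_j}$ makes $\neg c_j$ appear in an extension precisely when its embedded assignment satisfies $C_j$. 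Finally, a sentinel block including $\frac{:\neg w}{\neg w}$ and analogous rules ensures that $\neg s$ is entailed for each $s \in S(\Phi)$.

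Using this setup I would verify the iff by case analysis on extensions. For the direction ``$\Phi$ unsatisfiable implies the entailment fails'' I would exhibit the extension obtained from a non-satisfying assignment $A$: in this extension $c_{j^\ast}$ is derived positively for some $C_{j^\ast}$ falsified by $A$, so $\Delta \not\models \neg c_{j^\ast}$ and the condition breaks. The main obstacle, and technical heart of the proof, is the converse: with the naive encoding sketched above, extensions correspond to \emph{every} truth assignment, so ``$\Delta \models \neg c_j$ for all $j$'' collapses to $\Phi$ being a \emph{tautology} rather than merely satisfiable. Overcoming this is what forces $D(\Phi)$ to include extra cascades of unary defaults on auxiliary atoms that jointly simulate the conjunctive prerequisites forbidden by NU, effectively pruning the admissible extensions down to those coherent with a satisfying assignment. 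The remaining bookkeeping---checking that every rule stays NU, that the total theory size is polynomial in $|\Phi|$, and that the sentinel block correctly entails $\neg s$ for each $s \in S(\Phi)$ without interfering with the guessing block---is then a routine verification.
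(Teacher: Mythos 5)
There is a genuine gap, and it sits exactly where you locate the ``technical heart'' of the argument. You correctly diagnose that if the truth assignment is guessed \emph{by the extensions} (via prerequisite-free defaults $\frac{:t_i}{t_i}$, $\frac{:f_i}{f_i}$), then skeptical entailment of all the $\neg c_j$ holds iff \emph{every} assignment satisfies $\Phi$, i.e.\ iff $\Phi$ is a tautology --- the wrong problem. But the remedy you gesture at (``extra cascades of unary defaults \ldots{} pruning the admissible extensions down to those coherent with a satisfying assignment'') cannot work in this setting: the theory is normal, hence coherent and semi-monotonic, so every consistent combination of the guessing defaults extends to an extension; there is no mechanism in a normal (unary) theory for killing the extensions that correspond to falsifying assignments. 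Moreover, your choice $S(\Phi)=W(\Phi)$ is self-defeating for a second reason: the lemma requires $(D(\Phi),W(\Phi)_{S(\Phi)})\models\neg S(\Phi)$, i.e.\ $\models\neg t_i$ and $\models\neg f_i$ for all $i$, while your guessing block forces $t_i$ or $f_i$ into every extension, so the required entailment fails for every $\Phi$.

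The paper resolves this by relocating the nondeterministic guess from the extension semantics into the choice of $S(\Phi)$ itself, which is the existential quantifier actually available in the statement (and, downstream, in the definition of a witness set). It takes $W(\Phi)=\{x_1,\ldots,x_n\}$ and reads the assignment off the partition of $W(\Phi)$ into $S(\Phi)$ (variables set false, forced to $\neg x_i$ by $\frac{:\neg x_i}{\neg x_i}$ and proxied by $y_i$ via $\frac{:y_i}{y_i}$) and $W(\Phi)\setminus S(\Phi)$ (variables set true, which remain as facts). Consequently every extension of $(D(\Phi),W(\Phi)_{S(\Phi)})$ agrees on the assignment, and skeptical entailment of $\neg c_1,\ldots,\neg c_m$ deterministically \emph{evaluates} the clauses under that one assignment rather than quantifying over all assignments; satisfiability then corresponds to the existence of a suitable $S(\Phi)$. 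Your $t_i/f_i$ device for working around positive-only prerequisites matches the paper's $x_i/y_i$ trick, but without moving the guess into $S(\Phi)$ the reduction does not go through.
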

\begin{proof}
Let $\Phi = f(X)$ be a boolean formula in 3CNF,
where $X = x_1, \ldots, x_n$ is a set of variables, and $f(X) = C_1
\wedge \ldots \wedge C_m$, with $C_j = t_{j,1} \vee t_{j,2} \vee
t_{j,3}$, and each $t_{j,1},t_{j,2},t_{j,3}$ is a literal on the
set $X$, for $j=1,\ldots,m$.
Let $\Delta(\Phi) = (D(\Phi),W(\Phi))$ be a NU
default theory associated with
$\Phi$, where $W(\Phi)$ is the set $\{ x_1,\ldots,x_n\}$ of letters
and
$D(\Phi)$ is the set of defaults $D_1 \cup D_2 \cup D_3$, with:
\begin{eqnarray*}
D_1 & = & \left\{
\delta^{(1)}_{1,i} = \frac{x_i:\neg y_i}{\neg y_i},
\delta^{(1)}_{2,i} = \frac{:y_i}{y_i}
\mid i=1,\ldots,n \right\},\\
D_2 & = & \left\{
\delta^{(2)}_{j,k} = \frac{ \sigma(t_{j,k}) : \neg c_j}{\neg c_j}
\mid j=1,\ldots,m;~k=1,2,3 \right\},\\
D_3 & = & \left\{
\delta^{(3)}_{1,i} = \frac{: \neg x_i}{\neg x_i} 
\mid i=1, \ldots, n \right\}, 
\end{eqnarray*}
where $c_1$, $\ldots$, $c_m$ and $y_1$, $\ldots$, $y_n$ are new
letters distinct from those occurring in $\Phi$, and $\sigma(x_i) =
x_i$ and $\sigma(\neg x_i) = y_i$, for $i=1,\ldots,n$.
Let $R$ be a subset of $\{x_1,\ldots,x_n\}$. In the rest of the
proof, $\sigma(R)$ will denote the set $\{ \sigma(x) \mid x \in R \}$.
Moreover, $\sigma^{-1}$ will denote the inverse of $\sigma$, that is
to say $\sigma^{-1}(x_i) = x_i$ and $\sigma(y_i) = \neg x_i$, and
$\sigma^{-1}(R) = \{ \sigma^{-1}(x) \mid x \in R \}$.
%
%

Next, it is shown that there exists a set of literals $S(\Phi)\subset W(\Phi)$
such that $(D(\Phi),W(\Phi)_{S(\Phi)}) \models \neg (S(\Phi)\cup\{c_1,\ldots,c_m\})$
if and only if $\Phi$ is satisfiable.

First, the following claims are proved.
\begin{claim}\label{claim3}
Let $S\subseteq W(\Phi)$.
For each extension $\cal E$ of $(D(\Phi),W(\Phi)_S)$ it is the case that
\begin{itemize}
\item
for each $x_i\in S$, $\neg x_i\in{\cal E}$ and $y_i\in{\cal E}$, and
\item
for each $x_i\not\in S$, $x_i\in{\cal E}$.
\end{itemize}
\end{claim}
\begin{claimproof}{claim3}
Any extension $\cal E$ of $(D(\Phi),W(\Phi)_S)$ is such that
($i$)
for each $x_i\in S$, $x_i \not\in {\cal E}$ and $\neg x_i \in {\cal E}$,
since $x_i\not\in W(\Phi)_S$,
no letter $x_i$ appears in the consequence of a default in $D(\Phi)$,
and due to defaults $\delta^{(3)}_{1,i}$;
($ii$)
for each $x_i\in S$, $y_i \in {\cal E}$, due to defaults $\delta^{(1)}_{2,i}$; and
($iii$)
for each $x_i\not\in S$, $x_i \in {\cal E}$, since $x_i\in W(\Phi)_S$.
\end{claimproof}

\begin{claim}\label{claim2}
Let $S\subseteq W(\Phi)$.
Then, there exists an extension ${\cal E}^\ast$ of $(D(\Phi),W(\Phi)_S)$
such that
\begin{itemize}
\item
for each $x_i\in S$, $\neg x_i\in{\cal E}^\ast$ and $y_i\in{\cal E}^\ast$, and
\item
for each $x_i\not\in S$,
$x_i\in{\cal E}^\ast$ and $\neg y_i\in{\cal E}^\ast$.
\end{itemize}
\end{claim}
\begin{claimproof}{claim2}
Note that for each $x_i\not\in S$ either
$\neg y_i\in {\cal E}$ (due to defaults $\delta^{(1)}_{1,i}$)
or
$y_i \in {\cal E}$
(due to defaults $\delta^{(1)}_{2,i}$).
The proof then follows from Claim \ref{claim3}.
\end{claimproof}

We next resume to the main Lemma proof.

(If part) Assume that a subset $S(\Phi)\subseteq W(\Phi)$ exists
such that $(D(\Phi)$, $W(\Phi)_{S(\Phi)}) \models \neg(S(\Phi)\cup\{c_1,\ldots,c_m\})$.
Note that the set $S(\Phi)$ may contain some letters from the set
$\{x_1,\ldots,x_n\}$.
Further,
the defaults in the set $D_1$ serve the purpose
of introducing the letters $y_i$ 
in the current extension of the theory $(D(\Phi),W(\Phi)_S)$.
In particular,
the letter $y_i$ is intended to represent the negation
of letter $x_i$ and it is introduced since in NU theories negated
literals cannot be specified as the prerequisite of a default.

Moreover, the
defaults in the set $D_2$ evaluate the CNF formula $\Phi$
by using the truth value assignment encoded by the letters $x_i$ and $y_i$
belonging to the current extension of the theory $(D(\Phi),W(\Phi)_S)$.
In particular, if the $j$th conjunct $C_j$ of $\Phi$ is true,
then the literal $\neg c_j$ belongs to the current extension.

By Claim \ref{claim2}, there exists an exension ${\cal E}^\ast$
of $(D(\Phi),W(\Phi)_S)$ such that the set of literals
$R = \sigma^{-1}({\cal E}^\ast \cap (X \cup Y))$ is consistent and precisely
encodes a truth value assignment
for the variables in the set $X$, namely $T_R$.

To conclude, since $(D(\Phi),W(\Phi)_S) \models \neg c_1 \wedge \ldots \wedge \neg c_m$,
by the defaults in the set $D_2$, $T_R$ is a truth value assignment
to the variables in the set $X$ that makes the formula $f(X)$ true and,
hence, $\Phi$ is satisfiable.

(Only If part)
Assume now that the formula $\Phi$ is satisfiable, and
let $T_X$ be a truth value assignment to the variables in the set
$X$ that makes $f(X)$ true.
Next, we prove that for
$S(\Phi)=\{x_i \mid T_X(x_i) = \textbf{false} \}$
it holds that $(D(\Phi),W(\Phi)_{S(\Phi)}) \models \neg(S(\Phi)\cup\{c_1,\ldots,c_m\})$.

Let $\cal E$ be a generic extension of $(D(\Phi),W(\Phi)_{S(\Phi)})$.
As for the letters $x_i$ belonging to $S(\Phi)$,
due to defaults $\delta^{(3)}_{1,i}$,
$\neg x_i$ belongs to ${\cal E}$.

It remains to show that $\neg c_j\in{\cal E}$, for each $j\in\{1,\ldots,m\}$.
By Claim \ref{claim3}, for each extension $\cal E$ of
$(D(\Phi),W(\Phi)_{S(\Phi)})$, the set
$Q = {\cal E}^\ast \cap (X \cup Y)$
is a superset of $\sigma(\Lit(T_X))$.
That it to say, it might be the case that
both $x_i$ and $y_i$ are in the set $Q$, for a letter $x_i$ not in $S(\Phi)$.

Since the defaults in the set $D_2$ encode the boolean formula
obtained from $\Phi$ by substituting each negative literal $\neg x_i$
with the positive literal $y_i$,
having both $x_i$ and $y_i$ in $Q$ corresponds to assuming
that both the literals $x_i$ and $\neg x_i$ are, so as to say, true within $\Phi$.

Indeed, as already observed, $\Phi$ is evaluated by
means of a different formula, say $\Phi'$, obtained from $\Phi$
by substituting each occurrence of the negated literal $\neg x_i$ with the
letter $y_i$ (see the defaults in $D_2$).
If the literals in $Q$ form a superset of the literals associated
with a truth value assignment making $\Phi$ true, they also encode
a satisfying truth assignment that makes the formula $\Phi'$ true.
Thus, by virtue of the defaults in $D_2$, each extension $\cal E$ of
$(D(\Phi),W(\Phi)_{S(\Phi)})$ is such that $\neg c_j$ belongs to $\cal E$,
for $j=1,\ldots,m$.  
\end{proof}

\begin{figure}
\begin{center}
\includegraphics[width=300px,height=150px]{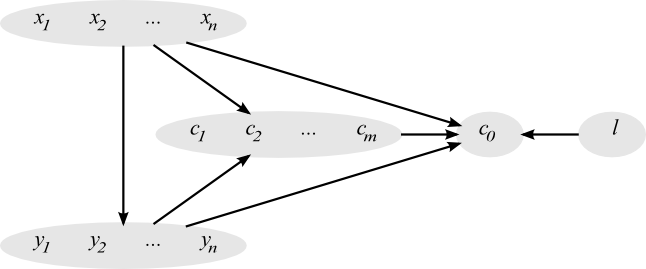}
\end{center}
\caption{The atomic dependency graph of the theory $\Delta'(\Phi)$ in Theorem \ref{th:out_acy_rec}.}
\label{fig:out_acy_rec_graph}
\end{figure}

\section{The tractabilty/intractability frontier for Outlier-Witness Recognition}
\label{sect:out_wit_prob}

This section is devoted to the problem Outlier-Witness Recognition, that is,
given default theory $\Delta = (W,D)$ and two disjoint sets $L$ and $S$
of literals from $W$, is it true that
$L$ is an outlier in $\Delta$ with witness $S$?.
We begin by recalling the following result from \citep{ABP10}:
\begin{theorem}[from \citep{ABP10}]
Over NU default theories, the problem Outlier-Witness Recognition
is solvable in polynomial time.
\end{theorem}

Therefore, in order to draw the sought frontier,
it remains to show the intractability of Outlier-Witness Recognition
in a language setting which is (possibly only slightly) more general than
that of NU theories, which is provided by the following result
which considers the
least general
NMU theories.

\begin{theorem}
Over quasi-acyclic NMU default theories, the problem Outlier-Witness Recognition is {\rm co-NP-hard}.
\end{theorem}
\begin{proof}
The proof is obtained via a reduction of the entailment problem
for quasi-acyclic NMU theories which was proved to be $\CONP$-complete
in Section \ref{th:nmu_entail} above. Consider an instance of the
entailment problem relative to a quasi-acyclic NMU theory $\Delta = (W,D)$
and one single literal $l$. Consider, now, the theory
$$\Delta' = (D',W') = \left(W \cup \{\neg f, g\} ,D \cup \left\{\frac{: \neg g}{\neg g};
\frac{\neg g: \neg f}{\neg f}; \frac{l:l'}{l'}; \frac{l':f}{f} \right\} \right),$$
where $f$, $g$ and $l'$ are letters not occurring in $\Delta$.
Clearly, $\Delta'$ is quasi-acyclic NMU and can be constructed in time
polynomial in the size of $\Delta$.

Now we have: $\Delta \models l$
implies $\Delta' \models l$ that, in turn, implies $\Delta' \models l'$.
From this, it follows that that $(D',W'_{\{\neg f\}}) \models f$ and,
since $(D',W'_{\{\neg f\},\{g\}}) \models \neg g$, we also have
$(D',W'_{\{\neg f\},\{g\}} \not\models f$, which finally means that $\{g\}$
is an outlier set in $\Delta'$ with witness $\{\neg f\}$.

As for the opposite direction,
assume that $\{g\}$ is an outlier set in $\Delta'$ with witness $\{\neg f\}$.
This means that $(D',W'_{\{\neg f\}}) \models f$ holds.
Therefore, since neither $f$ nor $g$ occur in $\Delta$, by minimality,
this implies in turn that $\Delta \models l'$ and, finally, that $\Delta \models l$.
This completes the proof.
\end{proof}

\section{The tractabilty/intractability frontier for Outlier Recognition}
\label{sect:out_rec_prob}

This section deals with the problem Outlier Recognition, that is,
given default theory $\Delta = (W,D)$ and a set $L$ of literals from $W$,
is there any set $S$ of literals from $W$, disjoint from $L$, such that
$L$ is an outlier in $\Delta$ with witness $S$?.

The section is organized as follows.
Section \ref{sect:gen_out_rec} considers the Outlier Recognition
problem over quasi-acyclic NU theories.
Section \ref{sect:strong_out} introduces the notion of strong outlier.
Subsequent Section \ref{sect:strong_out_wit_rec} studies
the Outlier-Witness recognition problem on the notion of
strong outlier. Finally, Section \ref{sect:strong_out_rec}
takes into account the complexity of the Outlier Recognition
problem on strong outliers both on cyclic and quasi-acyclic NU theories.

\subsection{General outliers and the Outlier Recognition problem}
\label{sect:gen_out_rec}

We begin, as in Section \ref{sect:out_wit_prob}, by recalling a theorem from \citep{ABP10},
but this time one stating an intractability result:

\begin{theorem}[from \citep{ABP10}]
Over NU default theories, the problem Outlier Recognition is $\NP$-complete.
\end{theorem}

By exploiting the result proved in Section \ref{sect:technical},
it is possible to sharpen the result presented above by showing that
Outlier Recognition remains intractable even over the (quite simple) class of {\em quasi-acyclic} NU theories, which is proven next.

\begin{theorem}\label{th:out_acy_rec}
\textit{Outlier Recognition} for quasi-acyclic NU default theories is $\rm NP$-complete.
\end{theorem}
\begin{proof}
(Membership) Membership in NP immediately follows from Theorem 3.6 of \citep{ABP10}.

(Hardness)
Let $\Phi$ be a boolean formula in 3CNF
(as described in Lemma \ref{lemma:intract}).
Checking 3CNF formulae satisfiability is a well known NP-complete problem.
\citep{pap94}.

Consider the default theory $(D(\Phi),W(\Phi))$ described
in the proof of Lemma \ref{lemma:intract}.
In order to prove the NP-hardness of the problem at hand, the NU
default theory $\Delta'(\Phi) = (D'(\Phi),W'(\Phi))$
is associated with
$\Phi$, where $W'(\Phi)$ is the set
$$W(\Phi) \cup \{ \neg l,c_0, c_1,\ldots,c_m \}$$
of letters, with
$l,c_0,c_1,\ldots,c_{m}$ being new letters distinct from those occurring in
$\Phi$, and $D'(\Phi)$ is the set of defaults
$D(\Phi) \cup D_4\cup D_5\cup D_6$, with:
\begin{eqnarray*}
D_4 & = & \left\{
\delta^{(4)}_{1,i} = \frac{x_i : \neg c_{0}}{\neg c_{0}},
\delta^{(4)}_{2,i} = \frac{y_i : \neg c_{0}}{\neg c_{0}}
\mid i=1, \ldots, n \right\},\\
D_5 & = & \left\{
\delta^{(5)}_j = \frac{c_j : c_{0}}{c_{0}}
\mid j=1,\ldots,m \right\}, \mbox{ and}\\
D_6 & = & \left\{
\delta^{(6)}_1 = \frac{:l}{l},
\delta^{(6)}_2 = \frac{l:c_{0}}{c_{0}}
\right\}.
\end{eqnarray*}
The theory $\Delta'(\Phi)$ can be built in polynomial time
and, moreover, $W'(\Phi)$ is consistent.

Now we can observe that the theory $\Delta'(\Phi)$
has tightness equal to $1$ --- the atomic dependency graph of $\Delta'(\Phi)$ is shown
in Figure \ref{fig:out_acy_rec_graph}.
Next, it is shown that $L=\{\neg l\}$ is an outlier in $\Delta'(\Phi)$
if and only if $\Phi$ is satisfiable.

(Only If part)
Assume that $L=\{\neg l\}$ is an outlier in $\Delta'(\Phi)$.
Since $L$ is an outlier, there exists
an outlier witness set
$S\subseteq W'(\Phi)\setminus L$ for $L$.

Now we show that the set $S$ contains all the literals
in the set $\{c_0$, $c_1$, $\ldots$, $c_{m}\}$.
The only literal of $W'(\Phi)\setminus L$
whose negation
is not entailed by $(D'(\Phi),W'(\Phi)_{S,L})$
is $c_{0}$, since this theory
entails $c_{0}$ by the defaults in $D_6$.
Thus, $c_{0}$ belongs to $S$,
for otherwise $\{\neg l\}$ would not be an outlier set.
Since $(D'(\Phi),W'(\Phi)_S)\models\neg S$ and $c_{0}\in S$,
by the defaults in $D_5$, $S$
contains the set $\{c_0,c_1,\ldots,c_{m}\}$,
for otherwise $(D'(\Phi),W'(\Phi)_S)\models\neg c_{0}$ cannot be true.

Let $C$ be $\{c_1,\ldots,c_m\}$ and
let $R$ be $S\setminus(C\cup\{c_0\})$.
Next we show that there does not exist an extension $\cal E$ of
$(D(\Phi),W(\Phi)_R)$ such that ${\cal E}\not\supseteq \neg (R\cup C)$
and, hence, that $(D(\Phi),W(\Phi)_R) \models \neg (R\cup C)$.

First, note that $W'(\Phi)_{S}$ equals $W(\Phi)_R\cup\{\neg l\}$,
while $D(\Phi)\subseteq D'(\Phi)$.
Since, $\neg l$ does not occur in any default
of the theory $(D(\Phi),W(\Phi))$ and since $\neg l$ occurs in
$W'(\Phi)$, then the semi-monotonic relationship still
holds between theories $(D(\Phi),W(\Phi)_R)$ and
$(D'(\Phi),W'(\Phi)_{R'})$.

Assume that there exists
an extension $\cal E$ of
$(D(\Phi),W(\Phi)_R)$ such that ${\cal E}\not\supseteq \neg (R\cup C)$.
Then, there exists either a literal $\neg x_{i'}\not\in{\cal E}$
(${i'}\in\{1,\ldots,n\}$) or
a literal $\neg c_{j'}\not\in{\cal E}$ ($j'\in\{1,\ldots,m\}$).
Moreover, by semi-monotonicity, there exists
an extension ${\cal E}'$ of $(D'(\Phi),W(\Phi)_{S})$
such that ${\cal E'}\supseteq{\cal E}$.
However, since no literal $\neg x_i$ ($1\le i\le n$) and
$\neg c_j$ ($1\le j\le m$) occur
in any default in the set $D_4\cup D_5\cup D_6$,
the missing literal ($x_{i'}$ or $c_{j'}$)
does not belong to ${\cal E}'$ as well and
$(D'(\Phi),W'(\Phi)_{S})\not\models \neg S$, a contradiction.
Thus, it can be concluded that
$(D(\Phi),W(\Phi)_R)\models\neg (R \cup C)$
and, by Lemma \ref{lemma:intract}, that
the formula $\Phi$ is satisfiable.

(If part) Assume now that the formula $\Phi$ is satisfiable.
Then, by Lemma \ref{lemma:intract}, there exists
a set of literals $R \subseteq \{x_1,\ldots,x_n\}$
such that $(D(\Phi),W(\Phi)_R)\models \neg(R\cup C)$.

Let $S$ be $R\cup C\cup \{c_0\}$.
We show next that $(D'(\Phi),W'(\Phi)_S)\models \neg S$.
By semi-monotonicity, it follows that there exists at least
one extension ${\cal E}'$ of $(D'(\Phi),W'(\Phi)_S)$
such that ${\cal E}'\supseteq \neg(R\cup C)$.
Moreover,
since no literal occurring in the set of defaults
$D(\Phi)$ also occurs in the consequence of any default in
$D_4\cup D_5\cup D_6$, each
extension ${\cal E}'$ of $(D'(\Phi),W'(\Phi)_S)$
is such that ${\cal E}'\supseteq \neg(R\cup C)$.
Then, for each extension ${\cal E}'$ of $(D'(\Phi),W'(\Phi)_S)$, there exists at least
one letter $x_i$ or $y_i$ in ${\cal E}'$. Indeed, if $x_i\not\in S$, then $x_i\in{\cal E}'$.
Otherwise, $y_1,\ldots,y_n\in{\cal E'}$. Thus, by virtue of the defaults in the set $D_4$,
the literal $\neg c_0$ belongs to ${\cal E}'$.
It can be concluded that $(D'(\Phi),W'(\Phi)_S)\models \neg S$.

Consider now the theory $(D'(\Phi),W'(\Phi)_{S,\{\neg l\}})$.
By virtue of the defaults in the set $D_6$, there exists at least one extension
${\cal E}'$ of this theory such that $c_0\in {\cal E}'$ and, thus,
$(D'(\Phi),W'(\Phi)_{S,\{\neg l\}}) \not\models \neg S$.
In other words, $L=\{\neg l\}$ is an outlier set with
associated outlier witness set $S$ in $\Delta'(\Phi)$.
%
\end{proof}

Unfortunately, the theorem reported above
confirms that detecting outliers, even in default theories
as structurally simple as quasi-acyclic NU ones, remains intractable.
Therefore, unless one is interested in looking into even simpler fragments of default logics,
in order to attain tractability, it is necessary to resort to a stronger notion of outlier,
allowing to single out a strict, yet interesting, subset of the outlier as captured by the general definition we have used up to this point.
This stronger notion is introduced next.

\subsection{Strong outliers}
\label{sect:strong_out}

In order to introduce strong outliers, we preliminarily
notice that conditions 1 and 2 of the Definition \ref{outlierD} can be rephrased as follows:
\begin{enumerate}
\item
$(\forall \ell\in S) (D,W_S) \models \neg \ell$, and
\item
$(\exists \ell\in S) (D,W_{S,L}) \not\models \neg \ell$.
\end{enumerate}
In other words, condition $1$ states that the negation of every literal $\ell\in S$
must be entailed by $(D,W_S)$ while, according to condition $2$, it
is sufficient that
just one literal $\ell\in S$ exists whose negation is not entailed by $(D,W_{S,L})$.
In order to strengthen this definition, it is therefore natural (and, as we shall show, enough) to modify Condition $2$, thereby
obtaining the following definition of \textit{strong outlier} set.
\begin{definition}[Strong Outlier]
\label{stroutD} Let $\Delta=(D,W)$ be a propositional default
theory and let $L \subset W$ be a set of literals.
If there exists a non-empty subset $S$ of $W_L$ such that:
\begin{enumerate}
\item
$(\forall \ell\in S) (D,W_S) \models \neg \ell$, and
\item
$(\forall \ell\in S) (D,W_{S,L}) \not\models \neg \ell$
\end{enumerate}
then $L$ is a {\em strong outlier} set in $\Delta$ and $S$ is a
{\em strong outlier witness} set for $L $ in $\Delta$.
\end{definition}
The following proposition shows that strong outliers indeed form a subset of outliers and is immediately proved:
\begin{proposition}\label{prop:strong}
If $L$ is a strong outlier set and $W$ its strong witness, then $L$ is an outlier set and $W$ its witness.
\end{proposition}
Note that, in general the converse of Proposition \ref{prop:strong}
does not hold.

An abstract example is presented below that provides some intuition about cases in which the newly introduced notion of strong outlier admittedly fits better than general outliers
(the reader is referred to \citep{ABP08} for compelling examples of application for general outliers).
Consider a default theory
$\Delta=(D,W)$ where $D$ is the set of defaults
$$\left\{ \frac{a_1 : b_1}{b_1},\frac{a_2 : b_2}{b_2},\ldots,\frac{a_n : b_n}{b_n}\right\},$$
and $W$ is the set of observations $\{a_1, a_2,....,a_n, \neg b_1, ..., \neg b_n\}$.

According to the standard definition of outlier,
we get that for any possible nonempty subset $L$ of $\{a_1, a_2,...,a_n\}$ and any nonempty subset $S$ of $\{\neg b_1, \neg b_2,...,\neg b_n\}$ that includes at least one literal $\neg b_i$ with $a_i \in L$, $S$ is a witness for $L$ being an outlier. So, even if a certain default $\frac{a_j :b_j}{b_j}$
is not related to another default $\frac{a_k :b_k}{b_k}$, with $j \neq k$
(for example, one is about birds, the other is about students),
anyway
$\{\neg b_j, \neg b_k\}$ is a witness set for the outlier set $\{a_j\}$.

Vice versa, strong outliers establish a closer
correspondence between the outliers and their witness sets.
In the example above, the set $\{\neg b_j, \neg b_k\}$ is {not} a strong witness for $\{a_j\}$, but $\{\neg b_j\}$ is.

\subsection{Strong outliers and the Outlier-Witness Recognition Problem}
\label{sect:strong_out_wit_rec}

In order to mark the tractability landscape of the \textit{Strong Outlier Detection}
problems,
we preliminarily provide two results, the former one regarding the tractability
of the outlier-witness recognition problem and the latter one pertaining to its intractability
\begin{lemma}\label{th:out_wit_rec}
\textit{Strong Outlier-Witness Recognition}
on propositional {\em NU} default theories is
in $\rm P$.
\end{lemma}
\begin{proof}
The proof is immediate since the statement follows from the definition of strong outlier set
(Definition \ref{stroutD}) and the fact that the entailment problem on
propositional NU default theories is polynomial time
solvable (as proved in \citep{KaSe91,Ben02}).
\end{proof}
\begin{lemma}\label{th:out_wit_rec_df}
Strong Outlier-Witness Recognition on propositional {\em DF} default theories
is NP-hard.
\end{lemma}
\begin{proof}
The statement follows from the reduction employed in Theorem 4.6 of \citep{ABP08},
where it is proved that given two DF default theories $\Delta_1=(D_1,\emptyset)$ and
$\Delta_2=(D_2,\emptyset)$,
and two letters $s_1$ and $s_2$, the problem $q$ of deciding
whether $((\Delta_1 \models s_1) \wedge (\Delta_2\models s_2))$ is valid can be reduced
to the outlier-witness problem; that is, to the
problem of deciding whether
$L=\{s_2\}$ is an outlier having witness set $S=\{\neg s_1\}$
in the theory $\Delta(q)$, where $\Delta(q)=(D(q),W(q))$
is the propositional DF default theory
with $D(q) = \{ \frac{s_2\wedge \alpha: \beta}{\beta} \mid \frac{\alpha:\beta}{\beta}\in D_1\} \cup D_2$ and $W(q)=\{\neg s_1,s_2\}$.
Since the former problem is NP-hard, it follows from the reduction
that the latter problem is NP-hard as well.
In order to complete the proof, we note that
a singleton witness set is always a strong witness set and, hence,
the above reduction immediately applies to strong outliers as well.
\end{proof}

\subsection{Strong outliers and the Outlier Recognition Problem}
\label{sect:strong_out_rec}

Now that we have introduced the notion of strong outlier, we are ready to study
the implication of adopting such stronger notion on the complexity of the
problem Outlier Recognition.
First of all, using most of the technical machinery presented in previous
section, we are able to prove a first tractability result concerning this
outlier detection problem, as shown in the following. We begin by presenting a
further technical lemma, which provides a characterization
of minimal strong outlier witness sets in propositional NMU theories,

\begin{lemma}
\label{sccsize}
Let $(D,W)$ be a consistent NMU default theory and
let $L$ be a set of literals in $W$. Then $L$ is a strong outlier set in $(D,W)$
iff there exists an outlier witness set $S$ for $L$ in $(D,W)$ such that
$\lett{S}$ is a subset of a SCC in the atomic dependency graph of $(D,W)$.
\end{lemma}
\begin{proof} Let $L$ be a set of literals in $W$ and let $S$ be an
outlier witness set for $L$ in $(D,W)$. By definition, the following
must be true:
\begin{enumerate}
\item $(\forall \ell\in S) (D,W_S) \models \neg \ell$, and
\item $(\forall \ell\in S) (D,W_{S,L}) \not\models \neg \ell$.
\end{enumerate}
We can partition $S$ into disjoint sets $S_1,\ldots,S_n$ such that the
following holds:
\begin{enumerate}
\item
$\union_{i=1}^{n} S_i =S$.
\item
For each $1 \leq i \leq n$, if $l$ and $q$ are in $S$ then $l \in
S_i$ and $q \in S_i$ if and only if $\lett{l}$ and $\lett{q}$ are in
the same SCC in the atomic dependency graph of $(D,W)$.
\item
For each $q,l \in S$ and for each $1 \le i \leq n-1$, if $l \in
S_{i}$ and $q \in S_{j}$ for some $2 \leq j \leq n$ such that $i
<j$, then there is no path in the atomic dependency graph from
\lett{q} to \lett{l} (that is, the $S_i$'s are ordered according to
the the reachability relationship induced on the the atomic dependency graph).
\end{enumerate}
Next we show that the following holds:
\begin{enumerate}
\item $(\forall \ell\in S_1) (D,W_{S_1}) \models \neg \ell$, and
\item $(\forall \ell\in S_1) (D,W_{S_1,L}) \not\models \neg \ell$.
\end{enumerate}
That means that $S_1$ is a strong outlier witness set for $L$. Since $S_1$
is a subset of an SCC in the atomic dependency graph of $(D,W)$ this
will complete the proof.

We first consider the condition
$(\forall\ell\in S_1) (D,W_{S_1}) \models \neg \ell$.
It is given that $(\forall\ell\in S) (D,W_S) \models \neg \ell$. In other words,
$(\forall\ell\in S) (D,W_{S_1,...,S_n}) \models \neg \ell$ so that clearly
$(\forall\ell\in S_1) (D,W_{S_1,...,S_n}) \models \neg \ell$. Since there is no path in
the atomic dependency graph from any letter of a literal in
$S_2,\ldots,S_n$ to a letter of a literal in $S_1$, we can use the
Incremental Lemma to conclude that $(\forall\ell\in S_1) (D,W_{S_1}) \models \neg \ell$.

Now consider the condition $(\forall\ell\in S_1) (D,W_{S_1,L}) \not\models \neg \ell$.
Since $(\forall\ell\in S) (D,W_{S,L}) \not\models \neg \ell$, clearly
$(\forall\ell\in S_1) (D,W_{S,L})\not\models \neg \ell$.
Note that it cannot be the case that $\exists\ell\in S_1$ such that
$(D,W_{S,L}) \not\models \neg \ell$ but
$(D,W_{S,L} \cup \{S_2,...,S_n\}) \models \neg \ell$, since there is
no path in the atomic dependency graph from $S_2,\ldots,S_n$ to $S_1$.
Hence $(\forall\ell\in S_1), (D,W_{S,L} \cup \{S_2,...,S_n\}) \not\models \neg \ell$ or,
equivalently, $(\forall\ell\in S_1) (D,W_{S_1,L} ) \not\models \neg \ell$.
\end{proof}

Armed with the above result, we are now able to prove the following:

\begin{theorem}\label{th:strout_acy_rec}
\textit{Strong Outlier Recognition} for NU quasi-acyclic default theories is in $\rm P$.
\end{theorem}
\begin{proof}
Given a NU default theory $(D,W)$ of tightness $c$ and a
set of literals $L$ from $W$, by Lemma \ref{sccsize}
a minimal outlier witness set $S$ for
$L$ in $(D,W)$ has a size of at most $c$, since $c$ is the maximum
size of an SCC in the atomic dependency graph of $(D,W)$.
Thus, the strong outlier recognition problem
can be decided by solving the strong outlier-witness recognition
problem for each subset $S$ of literals in $W_L$ having a size of at most $c$.
Since the latter problem is polynomial time solvable (by Theorem \ref{th:out_wit_rec})
and since the number of times it has to be evaluated,
that is $O(|W|^c)$, is polynomially related to the size of the input,
then the procedure solves the strong
outlier recognition problem in polynomial time. 
\end{proof}

Therefore, on the one side now we know that strong outlier recognition can be
attained in polynomial time over quasi-acyclic NU theories and, on the other side, we
know that (general) outlier recognition is intractable over the same kind of
theories. Therefore, the natural question arises of whether theorem
\ref{th:strout_acy_rec} can be strengthened as to refer to more general forms of
theories. The following result demonstrate that this is unfortunately not the
case.

\begin{theorem}\label{th:strout_cyc_rec}
\textit{Strong Outlier Recognition} for NU (cyclic) default theories is $\rm NP$-complete.
\end{theorem}
\begin{proof}
(Membership) Since strong outliers are a subset of general outliers,
the result immediately follows from
Theorem 3.6 of \citep{ABP10}.

(Hardness)
The proof refers, again, to the satisfiability problem for 3CNF. Let $\Phi$ be one such a formula and consider
the default theory $(D(\Phi),W(\Phi))$ described in the proof of Lemma \ref{lemma:intract}.
In order to prove the NP-hardness of the problem at hand, the NU
default theory $\Delta'(\Phi) = (D'(\Phi),W'(\Phi))$
is associated with
$\Phi$, where $W'(\Phi)$ is the set
$$W(\Phi) \cup \{ \neg l, c_1,\ldots,c_m \}$$
of letters, with
$l,c_1,\ldots,c_{m}$ being new letters distinct from those occurring in
$\Phi$, and $D'(\Phi)$ is the set of defaults
$D(\Phi) \cup D_4\cup D_5\cup D_6$, with:
\begin{eqnarray*}
D_4 & = & \left\{
\delta^{(4)}_i = \frac{f : x_i}{x_i},
\mid i=1, \ldots, n \right\},\\
D_5 & = & \left\{
\delta^{(5)}_{1,j} = \frac{c_j :f}{f},
\delta^{(5)}_{2,j} = \frac{f : c_j}{c_j}
\mid j=1,\ldots,m \right\}, \mbox{ and}\\
D_6 & = & \left\{
\delta^{(6)}_1 = \frac{:l}{l},
\delta^{(6)}_2 = \frac{l:f}{f}
\mid j=1,\ldots,m
\right\}.
\end{eqnarray*}
The theory $\Delta'(\Phi)$ can be built in polynomial time
and, moreover, $W'(\Phi)$ is consistent.
Because of the defaults in the set $D_4$, the theory $\Delta'(\Phi)$ is cyclic.

Next it is shown that $L=\{\neg l\}$ is a strong outlier in $\Delta'(\Phi)$
if and only if $\Phi$ is satisfiable.
The following claim will be useful towards this end.

\begin{claim}\label{claimf}
Let $R$ be a subset of $W'(\Phi)$. If there exists an extension $\cal E$ of $(D'(\Phi),W'(\Phi)_R)$ such that
the letter $f$ belongs to $\cal E$ then, for each $\ell\in R\setminus\{\neg l\}$, $(D'(\Phi),W'(\Phi)_R)\not\models\neg\ell$.
\end{claim}
\begin{claimproof}{claimf}
Note that if the letter $f$ can be entailed then,
for each $\ell\in R\setminus\{\neg l\}$,
by virtue of defaults $\delta^{(4)}_i$ and $\delta^{(5)}_{2,j}$,
there exists at least one extension ${\cal E}_\ell$ of
$(\Delta'(\Phi),W'(\Phi)_R)$ such that $\ell\in {\cal E}_\ell$
and, consequently, $(\Delta'(\Phi),W'(\Phi)_R)\not\models \neg\ell$.
\end{claimproof}

We can now resume to the proof of Theorem \ref{th:strout_cyc_rec}.

(Only If part)
Assume that $L=\{\neg l\}$ is a strong outlier in $\Delta'(\Phi)$.
Since $L$ is a strong outlier, there exists
a strong outlier witness set
$S\subseteq W'(\Phi)\setminus L$ for $L$.

By Claim \ref{claimf}, the letter $f$ cannot belong to any extension
of $(D'(\Phi),W'(\Phi)_S)$,
for otherwise $S$ is not a (strong) outlier witness set
inasmuch as Condition 1 of Definition \ref{outlierD} is not satisfied.
Thus, because of defaults $\delta^{(5)}_{1,j}$,
it is the case that the set $S$ contains all the literals
in the set $\{c_1$, $\ldots$, $c_{m}\}$.

Let $C$ be $\{c_1,\ldots,c_m\}$ and let $R$ be $S\setminus C$.
By using exactly the same argument as that employed in the Only-If part of Theorem \ref{th:out_acy_rec}, it can be proved that
$(D(\Phi),W(\Phi)_R) \models \neg (R\cup C)$,
and, by Lemma \ref{lemma:intract},
it can be concluded that
the formula $\Phi$ is satisfiable.

(If part) Assume now that the formula $\Phi$ is satisfiable.
Then, by Lemma \ref{lemma:intract}, there exists
a set of literals $R \subseteq \{x_1,\ldots,x_n\}$
such that $(D(\Phi),W(\Phi)_R)\models \neg(R\cup C)$.

Let $S$ be $R\cup C$. We next show that $(D'(\Phi),W'(\Phi)_S)\models \neg S$.
By semi-monotonicity, it follows that there exists at least
one extension ${\cal E}'$ of $(D'(\Phi),W'(\Phi)_S)$
such that ${\cal E}'\supseteq \neg(R\cup C)$.
Moreover,
since no default in
$D_4\cup D_5\cup D_6$
may belong to the set of generating defaults
of $(D'(\Phi),W'(\Phi)_S)$, it is the case that each
extension ${\cal E}'$ of $(D'(\Phi),W'(\Phi)_S)$
is such that ${\cal E}'\supseteq \neg(R\cup C)$.
Indeed, $f$ cannot be entailed by this theory since
no letter among
$c_1,\ldots,c_m$ appears in $W'(\Phi)_S$
and in the consequence of any default in $D'(\Phi)$ and,
moreover, $\neg l$ belongs to $W'(\Phi)_S$.
It can thus be concluded that $(D'(\Phi),W'(\Phi)_S)\models \neg S$.

Since, by virtue of the defaults in the set $D_6$,
there exists at least one extension $\cal E$ of the theory
$(D'(\Phi),W'(\Phi)_{S,\{\neg l\}})$
such that $f$ belongs to $\cal E$,
by Claim \ref{claimf}, that
$(D'(\Phi),W'(\Phi)_{S,\{\neg l\}})$ satisfies Condition 2
of Definition \ref{outlierD} is implied and, hence,
that $S$ is a strong outlier witness set for $L$.
Thus, $L=\{\neg l\}$ is a strong outlier set
in $\Delta'(\Phi)$.
This closes the proof. 
\end{proof}

We close the section by noticing that, as far as the complexity of the
\textit{Strong Outlier Recognition} problem
on propositional DF and general default theories is concerned,
the following result directly follows from
Theorem \ref{th:strout_cyc_rec}.

\begin{corollary}
\textit{Strong Outlier Recognition} for DF and general default theories is NP-hard.
\end{corollary}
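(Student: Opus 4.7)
The plan is to observe that the corollary follows essentially by a class inclusion argument from the preceding theorem. Since every NU default theory is by construction a DF default theory (the prerequisites, justifications, and consequents are literals, and $W$ is a set of literals), and every DF theory is in particular a general default theory, any NP-hardness reduction whose target instance is an NU theory is automatically an NP-hardness reduction for the corresponding problem on the larger classes.

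Concretely, I would invoke the reduction exhibited in the hardness part of Theorem \ref{th:strout_cyc_rec}: from an arbitrary 3CNF formula $\Phi$ we constructed in polynomial time an NU cyclic default theory $\Delta'(\Phi)=(D'(\Phi),W'(\Phi))$ together with a candidate outlier set $L=\{\neg l\}$, and we proved the equivalence ``$\Phi$ is satisfiable iff $L$ is a strong outlier in $\Delta'(\Phi)$''. Because $\Delta'(\Phi)$ is, in particular, a DF default theory and a general default theory, the very same polynomial-time many-one reduction witnesses NP-hardness of the Strong Outlier Recognition problem on those wider classes.

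There is essentially no new obstacle to overcome: the only thing to remark is that the notion of strong outlier does not depend on the class of the theory (Definition \ref{stroutD} is stated for an arbitrary propositional default theory), so the ``strong outlier'' predicate evaluated on $\Delta'(\Phi)$ gives the same truth value whether we view $\Delta'(\Phi)$ as NU, DF, or general. Consequently the reduction preserves both completeness and soundness when lifted to the larger classes, and 3SAT $\le_p$ Strong Outlier Recognition on DF (resp.\ general) default theories. This establishes the NP-hardness claim for both fragments simultaneously.
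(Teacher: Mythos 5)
Your proposal is correct and matches the paper's intent exactly: the paper states that the corollary \emph{directly follows} from Theorem \ref{th:strout_cyc_rec}, which is precisely the class-inclusion argument you spell out (an NU theory is in particular DF, and a DF theory is in particular a general default theory, so the same reduction applies). No gaps.
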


\section{The tractabilty/intractability frontier for Outlier Existence}

In this section we characterize the complexity of the
outlier existence problem. We recall from Section \ref{sect:outlier}that
the Outlier Existence problem is: given a default theory
$\Delta=(D,W)$ and a positive integer $k$, is it true that there
exists and outlier $L$ in $\Delta$ such that $|L|\le k$?

We begin by considering the complexity of this problem on general (cyclic)
NU default theories.
\begin{theorem}\label{th:out_cyc_exist}
Over NU default theories, the problem \textit{Outlier Existence}
is $\NP$-complete.
\end{theorem}
\begin{proof}
The result immediately follows from \citep{ABP10},
where it is shown that \textit{Outlier Existence}
over NU theories is $\NP$-complete for any fixed constant $k$.
\end{proof}

Unluckily, restricting attention to strong outliers and quasi-acyclic
NU default theories does not allow to attain tractability on this problem,
as detailed by the following theorem.

\begin{theorem}\label{th:strong_outlier_existence}
{Strong Outlier Existence} for quasi-acyclic NU default theories is in NP-complete.
\end{theorem}
\begin{proof}
(Membership) Let $n$ be $|W|$.
It suffice to guess an outlier witness set $L\subset W$
of size not grater than $k\le n$ and
to check that
there exists a witness set $S\subset W$ of size at most $c$,
with $c$ the tightness of $\Delta$, such that
$(D,W_S)\entails\neg S$
and $(D,W_{S,L})\not\entails\neg S$.
The whole computation can be accomplished in nondeterministic
polynomial time by a Turing machine.

(Hardness)
The proof is by reduction to the well-known
NP-complete problem \textit{Hitting Set}:
given a collection $C=\{E_1,\ldots,E_m\}$ of subsets
of a finite set $V$ and a positive integer $k\le|V|$,
is there a subset $H\le V$, with $|H|\le k$, such that
$H$ contains at least one element from each subset in $C$?

Let $E_i=\{v_{i,1},\ldots,v_{i,n_i}\}$, with $1\le i\le m$.
The default theory $\Delta(C) = (D(C),W(C))$ is associated
with the collection $C$, where $W(C) = \neg V \cup \{ l, s \}$
and $D(C)=D_1\cup D_2\cup D_3$, with
\begin{eqnarray*}
D_1 & = & \left\{
\frac{l:\neg s}{\neg s} \right\},\\
D_2 & = & \left\{
\frac{:e_i}{e_i}, \frac{e_i:\neg s}{\neg s}
\mid i=1, \ldots, m \right\}, \mbox{ and}\\
D_3 & = & \left\{
\frac{:v_{i,j}}{v_{i,j}}, \frac{v_{i,j} : \neg e_i}{\neg e_i},
\mid i=1, \ldots, m; j = 1, \ldots, n_i \right\},\\
\end{eqnarray*}
where $l$ and $e_1,\ldots,e_m$ are new elements
distinct from those occurring in $V$.
Note that the theory $\Delta(C)$ is quasi-acyclic
and has tightness $c=1$.

It is shown next that $C$ has an hitting set of size
$k$ iff there exists a strong outlier set $L$ in $\Delta(C)$
having size $k+1$.

First of all, notice that $s$ is the unique literal in $W(C)$
that can be part of an outlier witness set, since
($i$)
for any literal $\ell$ in $W(C)$, $\ell$ does not appear in the conclusion
of a default rule, and ($ii$)
$s$ is the
only literal in $W(C)$ which appears negated in the conclusion of
a non-empty prerequisiste default.

Moreover, $S=\{s\}$ is indeed an outlier witness set, as
$(D(C),W(C)_{\{s\}})\entails\neg s$ by means of the default
in the set $D_1$.

(Only If part) Assume that $C$ has an hitting set $H$ of size $k$.
Consider the set $L=\{l\}\cup\{\neg v : v\in H\}$
having size $k+1$.
Now it is shown that $L$ is an outlier set.

Due the defaults in the set $D_2$ and since no default in $\Delta(C)$
has $s$ in its conclusion, if at least a letter $e_i$ belongs to an extension
of $(D(C),W(C)_{\{s\},L})$ then $\neg s$ belongs to the same extension,
Thus, $(D(C),W(C)_{\{s\},L})\not\entails \neg s$ iff there exists
an extension $\cal E$ such that no letter $e_i$ occurs in $\cal E$.

Consider now the defaults in the set $D_3$.
If at least a letter $v_{i,j}$ does not
occur in $W(C)_{\{s\},L}$, then there exists an extension which
contains $\neg e_i$ and does not contain $e_i$ ($1\le i\le m$).
Since, by construction, $H$ coincides with $L\setminus\{l\}$,
an extension $\cal E$ such that no letter $e_i$
occurs in $\cal E$ indeed exists. This proves that $L$
is an outlier set.

(If part)
Assume that there exists an outlier set $L\subseteq W(C)\setminus\{s\}$
of size $k+1$.

Then, it must be the case that $L\supseteq\{l\}$,
for otherwise $(D(C),W(C)_{\{s\},L})\entails\neg s$
by means of the default in $D_1$.

As already stated, it is the case that
$(D(C),W(C)_{\{s\},L})\not\entails \neg s$ if and only if there exists
an extension $\cal E$ such that no letter $e_i$ occurs in $\cal E$.

Such an extension exists provided that, for each $i\in\{1,\ldots,m\}$,
at least a letter $v_{i,j}$ occurs in $W(C)_{\{s\},L}$.
Since letters $e_i$ are associated with sets $E_i$,
it is the case that $L\setminus\{l\}$ encodes an hitting set $H$
having size $k$.
\end{proof}


\section{Tractable strong outlier enumeration algorithm}

Based on the above results, we are now ready to describe the algorithm
\textit{Strong Outlier Enumeration} which, for a fixed integer $k$, enumerates
in polynomial time all the strong outlier sets of size at most $k$
in a quasi-acyclic NU default theory.

\begin{figure}[t]
\begin{center}
\fbox{\begin{minipage}{0.95\textwidth}
\begin{algorithmic}[1]
\STATE \textbf{Input:} {$\Delta=(D,W)$ -- a NU default theory; $k$ -- a positive integer.}
\STATE \textbf{Output:} {$Out$ -- the set of all strong outlier sets $L$ in $\Delta$ s.t. $|L|\le k$.}
\medskip
\STATE let $C_1,\ldots,C_N$ the ordered SCCs in the atomic dependency graph of $\Delta$;
\STATE $Out = \emptyset$;
\FOR {$i=1$ \textbf{to} $N$}
\FORALL {$S\subset W$ s.t. $\lett{S}\subseteq C_i$}
\IF {$(\forall\ell\in S) (D,W_S)\models \neg\ell$}
\FORALL {$L\subseteq W_S$ s.t. $|L|\le k$ \textit{and} $\lett{L}\subseteq (C_1\cup\ldots\cup C_i)$}
\IF {$(\forall\ell\in S) (D,W_{S,L}) \not\models \neg\ell$}
\STATE $Out = Out \cup \{ L \}$;
\ENDIF
\ENDFOR
\ENDIF
\ENDFOR
\ENDFOR
\end{algorithmic}
\end{minipage}}
\end{center}
\caption{Algorithm \emph{Strong Outlier Enumeration}.}\label{fig:algorithm}
\end{figure}

The algorithm is detailed in Figure \ref{fig:algorithm}.
The SCCs $C_1,\ldots,C_N$ of the atomic
dependency graph of the theory are ordered such that there do not exist $C_i$ and $C_j$
with $i<j$ and two letters $l\in C_i$ and $q\in C_j$
such that there exists a path from $letter(q)$ to $letter(j)$.

The algorithm exploits Lemma \ref{sccsize} in order to
restrict the search space of witness sets to the subsets of the SCCs
of the atomic dependency graph (see steps 5-6),
exploits Lemma \ref{incremental} in order to consider as part of the candidate strong
outlier set $L$ only literals potentially influencing those included in the
current witness set $S$ (see step 8, where only SCCs preceding $C_i$
are taken into account),\footnote{As a further optimization,
only SCCs influencing $C_i$ should be taken into account in this step.}
and exploits Proposition \ref{th:nuentail} in order to solve the entailment
problem in $O(n^2)$ time (see steps 7 and 9).

As for the cost of the algorithm,
the number of strong outlier witness sets is $O(2^c(n/c))$,
$O(cn^2)$ is the cost of checking the first and the second condition
of the outlier definition,
while the number of of strong outlier sets is $O(n^k)$.
Summarizing, the cost of the algorithm is
$$O(2^c(n/c)(cn^2+n^kcn^2)) = O(2^c n^{k+3}).$$
Since $c$ and $k$ are fixed,
the algorithm enumerates the strong outliers
in polynomial time in the size of $(D,W)$.
For example, all the singleton strong outlier sets can be enumerated
in time $O(n^4)$.

\section{A technical discussion regarding the tractability/intractability frontier}
\label{sect:discussion}
Before closing the paper, we would like to offer some further comments
concerning the technical results presented above regarding the \textit{Outlier Recognition Problem}.
From Theorems \ref{th:out_acy_rec} and \ref{th:strout_cyc_rec}, it
is clear that neither quasi-acyclicity nor strongness alone are sufficient
to achieve tractability for this problem.
However, if both constraints are imposed together,
the complexity of the outlier recognition problem falls within the
tractability frontier, as shown in
Theorem \ref{th:strout_acy_rec}. In order to better understand the underlying rationale, we informally discuss next why the techniques exploited
in the proofs of Theorems \ref{th:out_acy_rec} and \ref{th:strout_cyc_rec}
fail in this case. By Lemma \ref{lemma:intract}, it follows that,
despite the difficulty to encode the
conjunction of a set of literals exploiting a NU theory, a CNF formula can still
be evaluated by means of condition $1$ of Definition \ref{outlierD}
applied to a quasi-acyclic NU theory.
In particular, in the construction of Lemma \ref{lemma:intract},
the letters $c_1, \ldots, c_m$ play the role of encoding
the truth values of the conjuncts $C_1, \ldots, C_m$ composing
the 3CNF formula $\Phi = f(X)$, while the subset $S(\Phi)$ of $X=\{x_1,\ldots,x_n\}$
encodes a truth value assignment to the variables in the set $X$
(specifically, $x_i$ is true iff it does not belong to $S(\Phi)$).
Hence, checking for
$(D(\Phi),W(\Phi)_{S(\Phi)}) \models \neg (S(\Phi) \cup \{c_1,\ldots,c_m\})$
is equivalent to verifying if $C_1\wedge\ldots\wedge C_m$ is true
under the truth value assignment encoded by $S(\Phi)$
(see Lemma \ref{lemma:intract} for the formal proof).

Now, in the presence of cyclicity, it is possible to constrain the witness set
$S$ to contain all the letters $c_1,\ldots,c_m$, by including them in the same
SCC of the atomic dependency graph and, thus, to resort to Lemma \ref{lemma:intract}
in order to prove NP-hardness (the reader is referred to Theorem \ref{th:strout_cyc_rec}
for the details).

In the quasi-acyclic case, it is still possible to constrain the witness set $S$
to contain all the letters $c_1,\ldots,c_m$, by exploiting condition $2$
of Definition \ref{outlierD} as done in the reduction of Theorem \ref{th:out_acy_rec}.
In particular,
the reduction introduces a dummy letter $c_0$ in the theory
such that $(D,W_{S,L})\not\models \neg c_0$.
As for the letters $c_1, \ldots, c_m$, their negation is entailed both by
$(D,W_S)$ and by $(D,W_{S,L})$. Thus,
in order to satisfy condition $2$ of Definition \ref{outlierD},
$c_0$ must be in $S$.
However, in order for $(D,W_S)\models \neg c_0$ to hold, it must be the case that
all letters $c_i$ ($1\le i\le m$) are in $S$.
In terms of the atomic dependency graph, this has been obtained
by adding an arc from each $c_i$ to $c_0$ together with an
arc from $l$ to $c_0$ (recall that $L=\{\neg l\}$ in the reduction),
without the need of introducing
cycles in the graph.

However,
if both quasi-acyclicity and outlier strongness are imposed, in order to exploit Lemma
\ref{lemma:intract}, it is needed to guarantee that for each $c_i$,
$(D,W_{S,L}) \not\models \neg c_i$ without the possibility of introducing
cycles in the atomic dependency graph.
Informally speaking, a possibility would be to add an arc
from $l$ to any $c_i$, so that $(D,W_{S,L})\models c_i$
( hence this theory does not entail $\neg c_i$), but the graph would become cyclic
(see the graph in Figure \ref{fig:out_acy_rec_graph}).
Moreover, removing the dummy letter $c_0$ from the reduction
in order to break the cycle would not help,
since in this case each singleton set $\{c_i\}$ would act as a potential
strong outlier witness set for $L=\{\neg l\}$.
It can be intuitively concluded that requiring quasi-acyclicity and strongness
simultaneously prevents the size of the minimal outlier witness to be unbounded
(this intuition is indeed formalized by Lemma \ref{sccsize})
and, hence, prevents the feasibility of reducing satisfiability
to the strong outlier recognition problem by exploiting the technique
depicted in Lemma \ref{lemma:intract}. Indeed, if the size of $S$
is bounded, CNFs having more than $|S|$ conjuncts cannot be
evaluated by means of condition $1$ of Definition \ref{outlierD}.

To conclude, we note that switching to cyclic theories, general outliers,
and NMU theories makes the cost of the \textit{Outlier Enumeration} algorithm exponential.
This is
justified by the results provided throughout the paper. 
Moreover, we point out the following observations:
\begin{itemize}
\item
For strong outliers, if the tightness $c$
is unbounded (on quasi-acyclic theories) the cost of the algorithm
depends exponentially on $c$
(see Theorem \ref{th:strout_cyc_rec} 
for the proof of
the intractability of the \textit{Strong Outlier Recognition} problem on quasi-acyclic NU theories).
\item
For general outliers, the algorithm still works provided
that in line 4 the set $S$ is constrained to be a subset
of $C_i\cup\ldots\cup C_N$, instead of a subset of $C_i$, and
that in line 6 the set $L$ is constrained to be a subset
of $C_1\cup\ldots\cup C_w$, where
$w=\max\{ j : l\in S \mbox{ and } letter(l) \in C_j \}$,
instead of a subset of $C_1\cup\ldots\cup C_i$ (note that $w\ge i$).
However, in this case, the algorithm depends exponentially on $n$, even if
it takes advantage of the structural property provided by the
Incremental Lemma (see Theorem \ref{th:out_acy_rec} 
for the proof of
the intractability of the \textit{Outlier Recognition} problem on NU theories).
To break the exponential dependency on $n$, the size of the
outlier witness set $S$ should be constrained
to be within a certain fixed threshold $h$, that is $|S|\le h$.
\item
The algorithm can be applied also to NMU theories, but in this
case the cost of steps 7 and 9 depends exponentially on the size of the theory
(the reader is referred to Theorem \ref{th:nmu_entail} 
for the proof of the co-NP-completeness of the entailment problem
for propositional (quasi-acyclic) NMU theories).
\end{itemize}

\section{Conclusions}
\label{sect:concl}

Traditional approaches model the normal behavior of individuals by performing
some statistical analysis  on the given data set and, then, singling out
those individuals whose behavior or characteristics significantly deviate from
normal ones.
On the other hand, it is supposedly quite interesting to
exploit {domain knowledge} in order to guide the search for anomalous
observations.
%



The outlier detection technique investigated here
is based on the definition introduced in \citep{ABP08},
which is an unsupervised one, in that no examples of normality/abnormality
are required.
This technique can be applied to databases
including observations to be examined, databases
provided by the organization which is interested in learning exceptional
individuals.
In order to exploit domain knowledge, databases are to be coupled
with a knowledgebase composed of default rules and observations.
The default rules can be supplied directly by the knowledge engineer or
they can be a product of a rule learning module.
E.g., the rule learning step can be based on theory and algorithms
developed for learning default rules \citep{DuvalN99}
and/or for metaquerying \citep{SOMZ96,BGI03,AngiulliBIP03}\footnote{Metaquerying
is a formal tool for learning rules that involve several
relations in the database and a metaquery directs the search by providing a
(partially specified) pattern of the rules of interest.}.

%


As a major contribution of this paper, we have analyzed
the tractability border associated with outlier
detection problems in default logics.

Overall, the results and arguments reported in this paper
indicate that outlier recognition, even in its strong version, remains challenging
and difficult on general default theories.
The tractability results we have provided nonetheless indicate that there.
are significant cases which can
be efficiently implemented.

\bibliographystyle{elsarticle-harv}

\end{document}